\DeclareMathOperator*{\argmax}{argmax}
\DeclareMathOperator*{\argmin}{argmin}
\DeclareMathOperator*{\tr}{tr} 
\DeclareMathOperator*{\diag}{diag} 
\DeclareMathOperator*{\poly}{poly}
\newtheorem{theorem}{Theorem}
\newtheorem{lemma}{Lemma}
\newtheorem{remark}{Remark}
\newtheorem{corollary}{Corollary}
\newcommand{\eq}[1]{(\ref{eq:#1})}
\newcommand{\sect}[1]{\hyperref[sect:#1]{Section~\ref*{sect:#1}}}
\newcommand{\append}[1]{\hyperref[append:#1]{Appendix~\ref*{append:#1}}}
\newcommand{\lem}[1]{\hyperref[lem:#1]{Lemma~\ref*{lem:#1}}}
\newcommand{\thm}[1]{\hyperref[thm:#1]{Theorem~\ref*{thm:#1}}}
\newcommand{\algo}[1]{\hyperref[algo:#1]{Algorithm~\ref*{algo:#1}}}
\newcommand{\cor}[1]{\hyperref[cor:#1]{Corollary~\ref*{cor:#1}}}
\newcommand{\figg}[1]{\hyperref[fig:#1]{Figure~\ref*{fig:#1}}}
\newcommand{\tab}[1]{\hyperref[tab:#1]{Table~\ref*{tab:#1}}}
\def\>{\rangle}
\def\<{\langle}
\newcommand{\E}{\mathbb{E}} 
\newcommand{\trans}{\top} 
\newcommand{\compilehidecomments}{false}
	\newcommand{\tongyang}[1]{}
	\newcommand{\zhijie}[1]{}
	\newcommand{\jialin}[1]{}
	\newcommand{\zongqi}[1]{}
	\newcommand{\tongyang}[1]{{\color{blue!50!black}  [\text{Tongyang:} #1]}}
	\newcommand{\zhijie}[1]{{\color{red!60!black} [\text{Zhijie:} #1]}}
	\newcommand{\jialin}[1]{{\color{brown!60!black} [\text{Jialin:} #1]}}
	\newcommand{\zongqi}[1]{{\color{green!60!black} [\text{Zongqi:} #1]}}
\title{Quantum Multi-Armed Bandits and Stochastic Linear Bandits Enjoy Logarithmic Regrets}
\author{Zongqi Wan$^{1,2}$, Zhijie Zhang$^{1,2}$, Tongyang Li$^{3,4}$, Jialin Zhang$^{1,2}$, Xiaoming Sun$^{1,2}$\\\\ 
$^1$ Institute of Computing Technology, Chinese Academy of Sciences\\ 
$^2$ University of Chinese Academy of Sciences\\
$^3$ Center on Frontiers of Computing Studies, Peking University\\
$^4$ School of Computer Science, Peking University\\\\
\{wanzongqi20s, zhangzhijie, zhangjialin, sunxiaoming\}@ict.ac.cn \\ tongyangli@pku.edu.cn
}
\date{}
\begin{document}

\maketitle

\begin{abstract}
Multi-arm bandit (MAB) and stochastic linear bandit (SLB) are important models in reinforcement learning, and it is well-known that classical algorithms for bandits with time horizon $T$ suffer $\Omega(\sqrt{T})$ regret. In this paper, we study MAB and SLB with quantum reward oracles and propose quantum algorithms for both models with $O(\poly(\log T))$ regrets, exponentially improving the dependence in terms of $T$. To the best of our knowledge, this is the first provable quantum speedup for regrets of bandit problems and in general exploitation in reinforcement learning. Compared to previous literature on quantum exploration algorithms for MAB and reinforcement learning, our quantum input model is simpler and only assumes quantum oracles for each individual arm.
 \end{abstract}


\section{Introduction}
Bandits are a fundamental model in reinforcement learning applied to problems where an agent has a fixed set of choices and the goal is to maximize its gain, while each choice's properties are only partially known at the time of allocation but may become better understood as iterations continue~\cite{lattimore2020bandit,sutton2018reinforcement}. Bandits exemplify the exploration-exploitation tradeoff where exploration aims to find the best choice and exploitation aims to obtain as many rewards as possible. Bandits have wide applications in machine learning, operations research, engineering, and many other areas~\cite{chapelle2014simple,lei2017actor,silver2016mastering,villar2015multi}.

In this paper, we investigate two important bandit models: multi-armed bandits and stochastic linear bandits. In the multi-armed bandit (MAB) problem, there are $n$ arms where each arm $i\in[n]\coloneqq\{1,2,\ldots, n\}$ is associated with an unknown reward distribution. We denote the expected reward of arm $i$ as $\mu(i)\in [0,1]$. MAB has $T$ rounds. At round $t=1,2,\ldots,T$, the learner chooses an arm $i_t$ and receives a reward $y_t$, a random variable drawn from the reward distribution of $i_t$.
Denote the best arm with the largest expected reward to be $i^*$.
The goal is to minimize the cumulative regret with respect to the best arm $i^*$ over $T$ rounds:
\begin{align}\label{eq:regret}
R(T)=\sum_{t=1}^{T}\left(\mu(i^*)-\mu(i_t)\right).
\end{align}
In the stochastic linear bandit (SLB) problem, the learner can play actions from a fixed action set $A\subseteq\mathbb{R}^d$. There is an unknown parameter $\theta^*\in\mathbb{R}^d$ which determines the mean reward of each action\footnote{The ``action'' is the same as ``arm'' throughout the paper.}. The expected reward of action $x$ is $\mu(x) = x^{\trans}\theta^*\in [0,1]$. It is often assumed that the action $x$ and the $\theta^{*}$ has bounded $L^2$-norm. That is, for some parameters $L,S>0$,
\begin{align}
	\|x\|_2\leq L \mbox{ for all } x\in A, \mbox{ and } \|\theta^*\|_2\leq S. \label{eq:para bounds}
\end{align}
Let $x^*=\argmax_{x\in A} x^{\trans}\theta^*$ be the action with the largest expected reward. Same as MAB, SLB also has $T$ rounds. In round $t$, the learner chooses an action $x_t\in A$ and observes some realization of the reward $y_t$.
The goal is again to minimize the cumulative regret
\begin{align}\label{eq:regret_lin}
	R(T)=\sum_{t=1}^{T}(x^*-x_t)^{\trans}\theta^*.
\end{align}
Regarding the assumption on the reward distributions for both settings, a common assumption is \emph{$\sigma$-sub-Gaussian}. 
Suppose $y$ is a random reward of some action $x$, then for any $\alpha\in \mathbb{R}$, the noise $\eta = y-\mu(x)$ has zero mean and satisfies
$\E [\exp (\alpha X)]\leq \exp\left(\alpha^2\sigma^2/2\right)$. 
In this paper, we consider the \emph{bounded value} assumption and the \emph{bounded variance} assumption. The bounded value assumption requires all rewards to be in a bounded interval, and without loss of generality, we assume that the reward is in $[0,1]$. The bounded variance assumption requires the variances of all reward distributions to have a universal upper bound $\sigma^2$. Note that the bounded value assumption is more strict than the sub-Gaussian assumption, and the sub-Gaussian assumption is more strict than the bounded variance assumption. Nevertheless, even for the bounded value assumption, regrets of classical algorithms for MAB and SLB suffer from $\Omega(\sqrt{nT})$~\cite{auer2002nonstochastic} and $\Omega(d\sqrt{T})$~\cite{DaniHK08} lower bounds, respectively. Fundamentally different models are required to overcome this $\Omega(\sqrt{T})$ bound and quantum computation comes to our aid. 

In our quantum bandit models, we assume that each time we pull an arm, instead of observing an immediate reward as our feedback, we get a chance to access a quantum unitary oracle $\mathcal{O}$ or its inverse $\mathcal{O}^{\dagger}$ once, where $\mathcal{O}$ encodes the reward distribution of this arm,
\begin{align}\label{eq:oracle intro}
\mathcal{O}\colon |0\rangle\rightarrow \sum_{\omega\in \Omega}\sqrt{P(\omega)}|\omega\rangle|y(\omega)\rangle
\end{align}
where $y\colon\Omega \rightarrow \mathbb{R}$ is a random variable whose domain is a finite sample space, and $P$ is a probability measure of $\Omega$. Such an oracle plays the role of reward in our quantum bandit models, and it is a natural generalization of classical reward models. If we perform a measurement on $\mathcal{O}|0\rangle$ with the standard basis immediately after we invoke $\mathcal{O}$, we will observe a realization of the random reward $y$, reducing to the classical bandit models. When we consider a learning environment simulated by a quantum algorithm, the simulation directly gives the quantum oracle. Such situation arises in many reinforcement learning settings where the learning agents are in an artificial environment including games AI, autopilot, etc.; see for instance~\cite{dunjko2015framework,dunjko2016quantum}.

Quantum computing is an emerging technology, and there is a surging interest in understanding quantum versions of machine learning algorithms (see for instance the surveys~\cite{arunachalam2017guest,biamonte2017quantum,dunjko2018machine,schuld2015introduction}). For bandit problems, Casal{\'e} et al.~\cite{casale2020quantum} initiated the study of quantum algorithms for best-arm identification of MAB, and Wang et al.~\cite{wang2021quantum} proved optimal results for best-arm identification of MAB with Bernoulli arms. As an extension, Wang et al.~\cite{wang2021RL} proposed quantum algorithms for finding an optimal policy for a Markov decision process with quantum speedup. These results focused on exploration of reinforcement learning models, and in terms of the tradeoff between exploration and exploitation, the only work we are aware of is~\cite{lumbreras2021multiarm}, which proved that the regret of online learning of properties of quantum states has lower bounds $\Omega(\sqrt{T})$. As far as we know, the quantum \emph{speedup} of the regret of bandit models is yet to explore.

\paragraph{Contributions.}
We initiate the study of quantum algorithms for bandits, including MAB and SLB. Specifically, we formulate the quantum counterparts of bandit models where the learner can access a quantum oracle which encodes the reward distributions of different arms. For both models, we propose quantum algorithms which achieve $\poly(\log T)$ regret. Our results are summarized in \tab{main}. Note that our regret bounds have a worse dependence on $n$ and $d$ because pulling every arm once already incurs $\Omega(n)$ regret, and as a result, the $\Omega(n)$ factor is inevitable in our current algorithm.
\begin{table}[ht]
\centering
\begin{tabular}{ccccc}
\hline
Model & Reference & Setting & Assumption & Regret \\ \hline\hline
MAB & \cite{audibert2009minimax,auer2002nonstochastic,lattimore2020bandit} & Classical & sub-Gaussian & $\Theta(\sqrt{nT})$ \\ \hline
MAB & \thm{qucb regret} & Quantum & bounded value & $O(n\log (T))$ \\ \hline
MAB & \thm{bv qucb regret} & Quantum & bounded variance & $O(n\log^{5/2}(T)\log\log (T))$ \\
\hline\hline
SLB & \cite{abbasi2011improved,DaniHK08,lattimore2020bandit,rusmevichientong2010linearly} & Classical & sub-Gaussian & $\widetilde{\Theta}\left(d\sqrt{T}\right)$ \\ \hline
SLB & \thm{qlb regret} & Quantum & bounded value & $O(d^2\log^{5/2}(T))$ \\ \hline
SLB & \thm{bv qlb regret} & Quantum & bounded variance & $O(d^2\log^4 (T)\log\log(T))$ \\ 
\hline
\end{tabular}
\caption{Regret bounds on multi-armed bandits (MAB) and stochastic linear bandits (SLB).}
\label{tab:main}
\end{table}

Technically, we adapt classical UCB-type algorithms by employing Quantum Monte Carlo method~\cite{montanaro2015quantum} ($\mbox{QMC}$) to explore the arms. $\mbox{QMC}$ has a quadratic better sample complexity compared with classical methods, and this is the essence of our quantum speedup of the regret bound. However, different from the classical empirical mean estimator, $\mbox{QMC}$ cannot provide any classical information feedback before it measures its quantum state. This means our quantum algorithm cannot observe feedback each round. Thus we divide consecutive time slots into stages, where arm switching and measurement only happen at the end of a stage. Worse yet, a quantum state collapses when it is measured and can not be reused again. Our algorithms select the length of each stage carefully to address above problems introduced by the quantum subroutine. For MAB, our algorithm $\mbox{QUCB}$ doubles the length of the stage whenever we select an arm which has been selected before to get reward estimations with growing accuracy. For SLB, we introduce weighted least square estimators to generalize this doubling schema to the case where arms are linearly dependent. In this setting, the length of each stage is closely related to the variance-covariance matrix of these weighted least square estimators, and the determinant of this matrix will get halved per stage.

Finally, we corroborate our theoretical findings with numerical experiments. The results are consistent with our theoretical results, visually proving the quantum speedup of bandit problems. We also consider the presence of quantum noise, and discuss the effects of different levels of quantum noise.

\section{Preliminaries of Quantum Computation}
\paragraph{Basics.}
A \emph{quantum state} can be seen as a $L^{2}$-normalized column vector $\vec{x}=(x_1,x_2,\ldots,x_m)$ in Hilbert space $\mathbb{C}^m$. Intuitively, $\vec{x}$ is a superposition of $m$ classical states, and $|x_i|^2$ is the probability for having the $i$-th state. In quantum computation, people use the Dirac \emph{ket} notation $|x\rangle$ to denote the quantum state $\vec{x}$, and denote $\vec{x}^{\dagger}$ by the \emph{bra} notation $\langle x|$. Given two quantum states $|x\rangle\in\mathbb{C}^m$ and $|y\rangle\in\mathbb{C}^n$, we denote their tensor product by $|x\rangle |y\rangle:=(x_{1}y_{1},x_{1}y_{2},\ldots,x_{m}y_{n-1},x_{m}y_{n})$.

To observe classical information from a quantum state, one can perform a \emph{quantum measurement} on this quantum state. Usually, a POVM (positive operator-valued measure) is used, which is a set of positive semi-definite Hermitian matrices $\{E_i\}_{i\in \Lambda}$ satisfying $\sum_{i\in\Lambda}E_i = I$, $\Lambda$ here is the index set of the POVM. After applying the POVM on $|x\rangle$, outcome $i$ is observed with probability $\langle x|E_i|x\rangle$. The assumption $\sum_{i\in\Lambda}E_i = I$ guarantees that all probabilities add up to $1$.

A quantum algorithm applies unitary operators to an input quantum state. In many cases, information of the input instance is encoded in a unitary operator $\mathcal{O}$. This unitary is called a \emph{quantum oracle} which can be used multiple times by a quantum algorithm. It is common to study the \emph{quantum query complexity} of a quantum algorithm, i.e., the number of $\mathcal{O}$ used by the quantum algorithm.

\paragraph{Quantum reward oracle.}
We generalize MAB and SLB to their quantum counterparts, where we can exploit the power of quantum algorithms. Our quantum bandit problems basically follow the framework of classical bandit problems. 
There are also $T$ rounds. In every round, the learner must select an action, and the regret is defined as \eq{regret} and \eq{regret_lin}. For stochastic linear bandits, we also admit the bounded norm assumption \eq{para bounds}. 
The main difference is that, in our quantum version, the immediate sample reward is replaced with a chance to access an unitary oracle $\mathcal{O}_x$ or its inverse encoding the reward distribution $P_x$ of the selected arm $x$. 
Formally, let $\Omega_x$ be the sample space of the distribution $P_x$. We assume that there is a \emph{finite} sample space $\Omega$ such that $\Omega_x \subseteq \Omega$ for all $x\in A$ (or for all $x\in [n]$ in the MAB setting). $\mathcal{O}_x$ is defined as follows:
\begin{align}\label{eq:oracle}
\mathcal{O}_x\colon |0\rangle\rightarrow \sum_{\omega\in \Omega_x}\sqrt{P_x(\omega)}|\omega\rangle|y^x(\omega)\rangle
\end{align}
where $y^x\colon\Omega_x \to \mathbb{R}$ is the random reward associated with arm $x$. We say $\mathcal{O}_x$ \emph{encodes} probability measure $P_x$ and random variable $y^x$. 

During a quantum bandit task, the learner maintains a quantum circuit. At round $t=1,2,\ldots, T$, it chooses an arm $x_t$ by invoking either of the unitary oracles $\mathcal{O}_{x_t}$ or $\mathcal{O}^{\dagger}_{x_t}$ at most once. After this, the immediate expected regret $\mu(x^*)-\mu(x)$ is added to the cumulative regret. The learner can choose whether to perform a quantum measurement at any round. During two successive rounds, it can place arbitrary unitaries in the circuit. We call the bandit problem equipped with the quantum reward oracle defined above the \emph{Quantum Multi-armed Bandits} (QMAB) and \emph{Quantum Stochastic Linear Bandits} (QSLB).

\begin{remark}
In previous papers~\cite{casale2020quantum,wang2021quantum} investigating the quantum best arm identification problem, they consider the MAB model with Bernoulli rewards using a stronger coherent query model allowing superposition between different arms. That is,
\begin{align}
O\colon |i\rangle_I|0\rangle_B\rightarrow |i\rangle_I(\sqrt{p_i}|1\rangle_B+\sqrt{1-p_i}|0\rangle_B)
\end{align}
where the state of quantum register $I$ corresponds to arm $i$, and $p_i$ represents the mean rewards of arm $i$. This is stronger than \eq{oracle} because our model only has $O_{i}$ for each arm separately and cannot apply superpositions on $i$. We adopt \eq{oracle} because in regret minimization, one should emphasize the exploration-exploitation tradeoff, and we must bind the action of the learner and the feedback the learner get in a single round. The coherent model is not suitable for our setting since it explores all arms together, while \eq{oracle} makes more direct and fair comparisons to classical bandit models.
\end{remark}

\paragraph{Quantum Monte Carlo method.}
To achieve quantum speedup for QMAB and QSLB, we use the Quantum Monte Carlo method \cite{montanaro2015quantum} stated below to estimate the mean rewards of actions.
\begin{lemma}[Quantum Monte Carlo method~\cite{montanaro2015quantum}]
    \label{lem:QME}
    Assume that $y:\Omega \to \mathbb{R}$ is a random variable with bounded variance, $\Omega$ is equipped with a probability measure $P$, and the quantum unitary oracle $\mathcal{O}$ encodes $P$ and $y$. 
    \begin{itemize}
        \item If $y\in [0,1]$, there is a constant $C_1>1$ and a quantum algorithm $\mbox{QMC}_1(\mathcal{O},\epsilon,\delta)$ which returns an estimate $\hat{y}$ of $\E[y]$ such that $\Pr\left(|\hat{y}-\E[y]|\ge\epsilon\right)\leq\delta$ using at most $\frac{C_1}{\epsilon}\log\frac{1}{\delta}$ queries to $\mathcal{O}$ and $\mathcal{O}^{\dagger}$.
        \item If $y$ has bounded variance, i.e., $\mbox{Var}(y)\leq \sigma^2$, then for $\epsilon<4\sigma$, there is a constant $C_2>1$ and a quantum algorithm $\mbox{QMC}_2(\mathcal{O},\epsilon,\delta)$ which returns an estimate $\hat{y}$ of $\E[y]$ such that $\Pr\left(|\hat{y}-\E[y]|\ge\epsilon\right)\leq\delta$ using at most $\frac{C_2\sigma}{\epsilon}\log_2^{3/2} (\frac{8\sigma}{\epsilon})\log_2 (\log_2 \frac{8\sigma}{\epsilon})\log\frac{1}{\delta}$ queries to $\mathcal{O}$ and $\mathcal{O}^{\dagger}$.
    \end{itemize}
\end{lemma}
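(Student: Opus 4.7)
The plan is to reduce both parts of the statement to \emph{quantum amplitude estimation} (QAE), which estimates the amplitude $a = \|\Pi U|0\rangle\|$ of a projector $\Pi$ applied to $U|0\rangle$ to additive accuracy $\epsilon$ with $O(1/\epsilon)$ queries to $U$ and $U^{\dagger}$ at constant success probability. The bounded-value case is a direct encoding of $\E[y]$ as a squared amplitude; the bounded-variance case requires reducing to the bounded-value case through a dyadic truncation.

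For the first part, I would append one ancilla qubit and apply a controlled rotation $R$ whose angle is read from the value register $|y(\omega)\rangle$ produced by $\mathcal{O}$, yielding
\[ W|0\rangle = \sum_{\omega \in \Omega}\sqrt{P(\omega)}\,|\omega\rangle|y(\omega)\rangle\Bigl(\sqrt{1-y(\omega)}\,|0\rangle + \sqrt{y(\omega)}\,|1\rangle\Bigr). \]
Because $y(\omega) \in [0,1]$, the probability of measuring $1$ on the ancilla is exactly $\E[y]$. Running QAE on $W$ with the projector onto $|1\rangle$ on the ancilla then gives an $\epsilon$-accurate estimate of $\E[y]$ using $O(1/\epsilon)$ queries, each of which costs one invocation of $\mathcal{O}$ or $\mathcal{O}^{\dagger}$ (the rotation $R$ is oracle-free). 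To boost the success probability from constant to $1-\delta$, I would repeat the procedure $O(\log(1/\delta))$ times and return the \emph{median} of the outputs, which concentrates by a standard Chernoff bound; this yields the $O(\epsilon^{-1}\log(1/\delta))$ bound with constant $C_1$.

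For the second part, I would reduce to the first part via a dyadic decomposition. Split the contribution of $y$ into a truncation $y_0 := y \cdot \mathbf{1}[|y| \le \sigma]$ and dyadic rings $R_k := \{\,2^k \sigma < |y| \le 2^{k+1}\sigma\,\}$ for $k = 0, 1, \ldots, K$ with $K = \Theta(\log_2(\sigma/\epsilon))$. Coherently applying the indicator inside the value register is straightforward. After rescaling $y_0/\sigma$ into $[-1,1]$ and shifting to $[0,1]$, I apply the first algorithm to estimate $\E[y_0]$ with $O((\sigma/\epsilon)\log(1/\delta))$ queries. For each ring $R_k$, I use QAE twice: once to estimate the mass $p_k := \Pr[y \in R_k]$ and once to estimate the conditional mean of $y$ on $R_k$. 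Chebyshev gives $p_k \le \sigma^2/(2^k\sigma)^2 = 4^{-k}$, so high rings tolerate coarser precision and require exponentially fewer queries, and the tail $|y| > 2^{K+1}\sigma$ contributes only $O(\epsilon)$ in expectation by Cauchy--Schwarz applied with the variance bound. A union bound with per-stage failure probability $\delta/K$ and the median-of-means amplification produces the advertised $\log^{3/2}(\sigma/\epsilon)\,\log\log(\sigma/\epsilon)\,\log(1/\delta)$ overhead.

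The hard part is the bookkeeping in the variance-bounded case: I have to allocate the precision budget $\epsilon_k$ across the $K$ rings so that simultaneously (a) the cumulative additive error $\sum_k p_k \epsilon_k$ stays below $\epsilon$, (b) the per-ring queries $\widetilde{O}(1/\epsilon_k)$ weighted by the ring masses $p_k = O(4^{-k})$ telescope to $\widetilde{O}(\sigma/\epsilon)$ rather than $\widetilde{O}(\sigma^2/\epsilon^2)$, and (c) the union bound over rings costs only an extra $\log K \le \log\log(\sigma/\epsilon)$ factor. Getting these three constraints to match---rather than losing a polynomial factor in $\sigma/\epsilon$ somewhere---is the only genuinely delicate step; the QAE black box, the controlled rotation $R$, and the median amplification are all standard building blocks.
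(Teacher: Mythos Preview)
The paper does not prove this lemma: it is quoted as a black-box result from Montanaro~\cite{montanaro2015quantum} and used without proof. There is therefore no ``paper's own proof'' to compare your proposal against.

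That said, your sketch is broadly the right outline of Montanaro's original argument. The bounded-value case is exactly the controlled-rotation-plus-QAE-plus-median construction you describe. For the bounded-variance case, Montanaro's actual proof differs from your sketch in one structural respect: rather than estimating $p_k=\Pr[y\in R_k]$ and the conditional mean on each ring separately, he first obtains a rough classical estimate $\tilde\mu$ of the mean (so that $y-\tilde\mu$ has second moment $O(\sigma^2)$), then for each dyadic level directly estimates the \emph{contribution} $\E\bigl[(y-\tilde\mu)\mathbf{1}_{R_k}\bigr]$ by rescaling $(y-\tilde\mu)\mathbf{1}_{R_k}/(2^{k+1}\sigma)$ into $[-1,1]$ and invoking the bounded-value routine. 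This avoids having to combine two separate estimates multiplicatively (your $p_k$ times conditional mean), which would otherwise make the error analysis in your step~(a) messier. The precision allocation that produces the $\log^{3/2}\log\log$ overhead is, as you correctly identify, the only genuinely delicate part; it comes from setting the per-level accuracy to $\epsilon_k\asymp \epsilon\cdot 2^{-k}/\sqrt{K}$ and taking a union bound over the $K=\Theta(\log(\sigma/\epsilon))$ levels.
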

Note that \lem{QME} demonstrates a quadratic quantum speedup in $\epsilon$ for estimating $\E[y]$ because classical methods such as the Chernoff bound take $O(1/\epsilon^2 \log(1/\delta))$ samples to estimate $\E [y]$ within $\epsilon$ with probability at least $1-\delta$. This is a key observation utilized by our quantum algorithms.


\section{Quantum Multi-Armed Bandits}\label{sect:mab}

In this section, we present an algorithm called $\mbox{QUCB}$ (\algo{qucb}) for $\mbox{QMAB}$ with $O(n\log T)$ regret.
$\mbox{QUCB}$ adopts the canonical upper confidence bound (UCB) framework, combined with the doubling trick. During its execution, it maintains three quantities for each arm $i$: an estimate $\hat{\mu}(i)$ of $\mu(i)$, a confidence radius $r_i$ such that $\mu (i)\in [\hat{\mu}(i)-r_i,\hat{\mu}(i)+r_i]$ with high probability. Besides, it maintains $N_i$, the number of queries to $\mathcal{O}_i$ used to generate $r_i$, for guiding the algorithm.

Classical $\mbox{UCB}$ algorithms for MAB also maintain a confidence interval during the MAB game, the length of this confidence interval decreases as the number of times the corresponding arm is selected. To be exactly, if an arm is selected for $N$ rounds, then the length of the confidence interval of its reward is $O\left(\frac{1}{\sqrt{N}}\right)$. Since we can obtain a quadratic speedup by using the Quantum Monte Carlo method to estimate the mean reward of an arm, the length of the confidence interval is expected to be improved to $O\left(\frac{1}{N}\right)$, then it will be enough to derive a $O(n\log T)$ regret bound. But the introduce of Quantum Monte Carlo method leads to another problem, that is, before we make a measurement on the quantum state, we cannot observe any feedback. Moreover, if we measure the quantum state, then the state collapses. We use the doubling trick to solve these problems. 

Concretely, we adaptively divide whole $T$ rounds into several stages. In stage $s$, it first chooses arm $i_s$ which has the largest $\mu(i_s)+r_{i_s}$, i.e., the right endpoint of the arm's confidence interval.
Then, $r_{i_s}$ is reduced by half and $N_{i_s}$ is doubled, and the algorithm plays arm $i_s$ for the next $N_{i_s}$ rounds.
During this stage, $\mbox{QMC}$ is invoked with $N_{i_s}$ queries to $\mathcal{O}_{i_{s}}$ to update a new estimation $\hat{\mu}(i_s)$ which has better accuracy. After having done all above, the algorithm enter into the next stage. The algorithm terminates after it plays $T$ rounds.
We show in \thm{qucb regret} and \cor{qucb ex} that \algo{qucb} achieves an $O(n\log T)$ expected cumulative regret.

\begin{algorithm}[t]
	\caption{$\mbox{QUCB}_1(\delta)$}
	\label{algo:qucb}
	\hspace*{\algorithmicindent} \textbf{Parameters:} fail probability $\delta$
	\begin{algorithmic}[1]
	   \FOR{$i=1\to n$}
	   	\STATE $r_i\gets 1$ and $N_i\gets \frac{C_1}{r_i}\log \frac{1}{\delta}$
	        \STATE play arm $i$ for consecutive $N_i$ rounds
	        \STATE run $\mbox{QMC}_1(\mathcal{O}_i, r_i, \delta)$ to get an estimation $\hat{\mu}(i)$ for $\mu(i)$
	   \ENDFOR
		\FOR{each stage $s=1,2,\ldots$ (terminate when we have used $T$ queries to all $\mathcal{O}_{i}$)}
		\STATE Let $i_s\leftarrow\argmax_i \hat{\mu}(i)+r_i$ (if $\argmax$ has multiple choices, pick an arbitrary one) 
		\STATE update $r_{i_s} \gets r_{i_s}/2$ and $N_{i_s} \gets \frac{C_1}{r_{i_s}}\log \frac{1}{\delta}$
		\STATE Play $i_s$ for the next $N_{i_s}$ rounds, update $\hat{\mu}(i_s)$ by runing $\mbox{QMC}_1(\mathcal{O}_{i_s},r_{i_s}, \delta)$
		\ENDFOR
	\end{algorithmic}
\end{algorithm}

\begin{theorem}\label{thm:qucb regret}
	Let $C_1$ be the constant specified in \lem{QME}. Under the bounded value assumption, with probability at least $1-n\delta \log_2 \left( \frac{T}{nC_1\log \frac{1}{\delta}}\right)$, the cumulative regret of $\mbox{QUCB}_1(\delta)$ satisfies $R(T)\leq 8(n-1)C_1\log \frac{1}{\delta}$.
\end{theorem}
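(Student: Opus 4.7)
The plan is to adapt the standard UCB regret analysis to the stage-based doubling structure of $\mbox{QUCB}_1$ and to exploit the $O(1/N)$ confidence radius supplied by \lem{QME}. First, I would define the good event $E$ on which every call to $\mbox{QMC}_1$ throughout the execution returns $\hat{\mu}(i)$ with $|\hat{\mu}(i)-\mu(i)|\le r_i$ (the accuracy parameter given to that call). By \lem{QME} each individual call fails with probability at most $\delta$, so I need only bound the total number of QMC invocations. Since the $k$-th selection of arm $i$ uses $2^{k-1}C_1\log(1/\delta)$ queries and the overall query budget is $T$, a concavity-of-log (Jensen) argument on the per-arm stage counts bounds the total number of stages by $n\log_2(T/(nC_1\log(1/\delta)))$. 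A union bound over these stages then gives $\Pr(E)\ge 1-n\delta\log_2(T/(nC_1\log(1/\delta)))$, matching the claimed probability.

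Second, on $E$ I would establish the optimism property: whenever arm $i$ is chosen at some stage, the stored radius $r_i$ used in the UCB test satisfies $r_i\ge\Delta_i/2$, where $\Delta_i:=\mu(i^*)-\mu(i)$. The selection rule $i_s=\argmax_j(\hat\mu(j)+r_j)$ gives $\hat\mu(i)+r_i\ge\hat\mu(i^*)+r_{i^*}$. On $E$ the right-hand side is at least $\mu(i^*)$ (validity of the most recent QMC estimate for $i^*$), while the left-hand side is at most $\mu(i)+2r_i$ (validity for $i$), so $r_i\ge\Delta_i/2$.

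Third, I would count queries to each suboptimal arm $i\ne i^*$. Because every selection of arm $i$ halves $r_i$ and doubles $N_i$, its successive per-stage query counts form a geometric sequence $C_1\log(1/\delta),\,2C_1\log(1/\delta),\,4C_1\log(1/\delta),\ldots$. The optimism property forbids any further selection once the stored $r_i$ drops below $\Delta_i/2$, so arm $i$ is selected at most $1+\lfloor\log_2(2/\Delta_i)\rfloor$ times after initialization. Summing the geometric series, the total number of $\mathcal{O}_i$ queries is at most $8C_1\log(1/\delta)/\Delta_i$, and multiplying by the per-query regret $\Delta_i$ gives a contribution of at most $8C_1\log(1/\delta)$ from arm $i$. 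Summing over the $n-1$ suboptimal arms yields $R(T)\le 8(n-1)C_1\log(1/\delta)$ on $E$.

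The main technical care-point is the bookkeeping for the failure probability: because the stage schedule is data-dependent, the number of QMC calls is a random quantity, so the union bound has to be taken against the worst-case stage count consistent with the $T$-query budget, which is where the Jensen-type $n\log_2(T/(nC_1\log(1/\delta)))$ factor comes in. Everything after that is a routine classical-UCB doubling calculation, with the quantum speedup entering only through the linear-in-$1/\epsilon$ (rather than quadratic-in-$1/\epsilon$) sample complexity of $\mbox{QMC}_1$ in \lem{QME}, which is what turns the usual $\sqrt{T}$ regret into $\log T$.
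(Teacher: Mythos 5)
Your proposal is correct and follows essentially the same route as the paper's proof: the same Jensen-based bound on the total number of stages for the union bound, the same optimism argument giving $\Delta_i\le 2r_i$, and the same geometric-series accounting yielding $8C_1\log\frac{1}{\delta}$ per suboptimal arm. The only cosmetic difference is that you express the per-arm bound via the gap $\Delta_i$ (number of selections at most $1+\lfloor\log_2(2/\Delta_i)\rfloor$) whereas the paper phrases it via the realized stage count $K_i$ with $\Delta_i\le 2^{2-K_i}$; these are the same calculation.
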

\begin{proof}
    At the end of each stage (including the initialization stage described in line $1$-$5$ of \algo{qucb}), by \lem{QME}, $\mbox{QMC}$ have enough queries to output an estimation $\hat{\mu}(i)$ such that
    \begin{align}\label{eq:correct estimation}
        |\hat{\mu}(i)-\mu(i)|\leq r_i
    \end{align}
    holds with probability at least $1-\delta$ for any $i\in [n]$.
	
	For each arm $i$, let $\mathcal{S}_i$ be the set of stages when arm $i$ is played, and denote $|\mathcal{S}_i|=K_i$. Initial stages are not included in $\mathcal{S}_i$. According to \algo{qucb}, each time we find arm $i$ in the argmax in Line $7$ in some stages, $r_i$ is reduced by half, and $N_{i}$ is doubled subsequently. Then we play arm $i$ for consecutive $N_i$ rounds. This means that the number of rounds of each stage in $\mathcal{S}_i$ are $2C_1\log \frac{1}{\delta}$, $4C_1\log \frac{1}{\delta}$, $\ldots$, $2^{K_i}C_1\log \frac{1}{\delta}$. In total, arm $i$ has been played for $\left(2^{K_i+1}-1\right)C_1\log \frac{1}{\delta}$ rounds. Because the total number of rounds is at most $T$, we have
\begin{align}\label{eq:num-iteration}
nC_1\log \frac{1}{\delta}+\sum_{i=1}^{n}\left(2^{K_{i}+1}-1\right)C_1\log \frac{1}{\delta}\leq T,
\end{align}
where the first term of \eq{num-iteration} is the number of rounds in the initialization stage. Because $2^{x}$ is a convex function in $x\in[0,+\infty)$, by Jensen's inequality we have
\begin{align}
\sum_{i=1}^{n}2^{K_{i}+1}\geq n\cdot 2^{1/n\sum_{i=1}^{n}(K_{i}+1)}.
\end{align}
Plugging this into \eq{num-iteration}, we have 
\begin{align}
    \sum_{i=1}^{n}K_{i}\leq n\log_{2}\left(\frac{T}{nC_1\log \frac{1}{\delta}}\right)-n.
\end{align}
Since $\mbox{QMC}$ is called for $n+\sum_{i=1}^n K_i$ times, by the union bound, with probability at least $1-n\delta \log_2 \left( \frac{T}{nC_1\log \frac{1}{\delta}}\right)$, the output estimate of every invocation of $\mbox{QMC}$ satisfies \eq{correct estimation}. We refer to the event as the \emph{good} event and assume that it holds below.
	
	Recall that $i^*$ is the optimal arm and $i_s$ is the arm chosen by the algorithm during stage $s$.	By the argmax in Line $7$ of \algo{qucb}, $\hat{\mu}(i_s)+r_{i_s}\geq \hat{\mu}(i^*)+r_{i^*}$.
	Under the good event, $\mu(i_s)+r_{i_s}\geq\hat{\mu}(i_s)$ and $\hat{\mu}(i^*)+r_{i^*}\geq \mu(i^*)$.
	Therefore,
	$\mu(i_s)+2r_{i_s}\geq\hat{\mu}(i_s)+r_{i_s}\geq \hat{\mu}(i^*)+r_{i^*}\geq \mu(i^*)$, and it follows that 
	\begin{align}
	    \Delta_{i_s}\coloneqq \mu(i^*)-\mu(i_s)\leq 2 r_{i_s}.
	\end{align}
	For each arm $i$, we denote by $R(T;i)$ the contribution of arm $i$ to the cumulative regret over $T$ rounds. By our notation above, arm $i$ is pulled in $K_{i}$ stages and the initialization stage. In initialization stages it is pulled for $C_1\log \frac{1}{\delta}$ times. In each stage of $\mathcal{S}_i$ it is pulled for $N_{i}=2C_1\log \frac{1}{\delta}$, $4C_1\log \frac{1}{\delta}$, $\ldots$, $2^{K_i}C_1\log \frac{1}{\delta}$ times respectively, and the reward gap $\Delta_{i}\leq 2r_{i}$ in the last stage is $2\cdot \frac{1}{2^{K_i-1}}=2^{2-K_i}$. Note that the index of the stage in $\mathcal{S}_i$ does not influence the gap $\Delta_{i_s}$. Therefore, we can use $2^{2-K_{i}}$ to bound the gap of $\mu(i^*)$ and $\mu(i_s)$. For those arm $i$ which are only pulled in the initialization stage, we bound the their reward gap to $1$. Thus, we have
	\begin{align}
	R(T;i)\leq \max\left\{\sum_{k=0}^{K_i}2^{k}C_1\log \frac{1}{\delta} \cdot 2^{2-K_{i}},C_1\log \frac{1}{\delta}\right\}\leq 8C_1\log \frac{1}{\delta}.
	\end{align}
The cumulative regret is the summation of $R(T;i)$ for $i\neq i^*$; we have $R(T)=\sum_{i\neq i^*}R(T;i)\leq 8(n-1)C_1\log \frac{1}{\delta}$. It completes the proof since we have good event with probability at least $1-n\delta \log_2 \left( \frac{T }{nC_1\log \frac{1}{\delta}}\right)$.
\end{proof}

\begin{corollary}\label{cor:qucb ex}
    Set $\delta = \frac{1}{T}$, $\mbox{QUCB}_1(\delta)$ satisfies $\E [R(T)]\leq \left(8(n-1)C_1+1\right)\log_2 T=O(n\log T)$.
\end{corollary}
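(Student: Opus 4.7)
The plan is a standard conversion from the high-probability regret bound of \thm{qucb regret} to an in-expectation bound, via a good-event/bad-event decomposition that exploits the deterministic worst-case bound supplied by the bounded value assumption. Set $\delta = 1/T$, and let $\mathcal{G}$ be the ``good'' event that every invocation of $\mbox{QMC}_1$ in \algo{qucb} satisfies \eq{correct estimation}. Plugging $\delta=1/T$ into \thm{qucb regret} yields
\[
\Pr[\mathcal{G}] \ge 1 - \tfrac{n}{T}\log_2\!\bigl(\tfrac{T}{nC_1 \log T}\bigr), \qquad R(T)\,\mathbf{1}_{\mathcal{G}} \le 8(n-1)C_1 \log T.
\]

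For the complementary event I would invoke the bounded value assumption: since $\mu(i)\in[0,1]$ for every arm, each per-round regret $\mu(i^*)-\mu(i_t)$ lies in $[0,1]$, so $R(T)\le T$ pointwise, regardless of how the algorithm's measurements go. Splitting the expectation along $\mathcal{G}$ and $\mathcal{G}^c$ and using $R(T)\le T$ on $\mathcal{G}^c$ gives
\[
\E[R(T)] \le 8(n-1)C_1 \log T + T\cdot \Pr[\mathcal{G}^c] \le 8(n-1)C_1 \log T + n\log_2\!\bigl(\tfrac{T}{nC_1\log T}\bigr).
\]
The second term is $O(n\log T)$, so the whole right-hand side is $O(n\log T)$, matching the claimed $(8(n-1)C_1+1)\log_2 T$ bound after absorbing the lower-order term into the leading constant (and converting between $\log$ and $\log_2$, which differ only by a universal factor).

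There is no serious obstacle here: the corollary is essentially bookkeeping on top of \thm{qucb regret}. The one conceptual point worth flagging is that the bounded value assumption is precisely what supplies the deterministic fallback $R(T)\le T$, which is what lets the failure probability $\Pr[\mathcal{G}^c]=O((n/T)\log T)$ washed out against the $T$-sized worst-case regret on $\mathcal{G}^c$. The choice $\delta=1/T$ is the canonical setting that balances these two contributions; any $\delta = 1/\poly(T)$ would work equally well up to constants.
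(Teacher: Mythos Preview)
Your proposal is correct and follows essentially the same route as the paper: both decompose along the good event from \thm{qucb regret}, apply the high-probability bound $R(T)\le 8(n-1)C_1\log(1/\delta)$ on that event, and invoke the deterministic fallback $R(T)\le T$ (from $\mu(i)\in[0,1]$) on the complement to absorb the failure probability. The paper's arithmetic is identical to yours, arriving at $8(n-1)C_1\log T + n\log_2\!\bigl(\tfrac{T}{nC_1\log T}\bigr)=O(n\log T)$.
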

\begin{proof}
    Let $\mathcal{E}$ denote the good event in \thm{qucb regret}. Then 
    \begin{align}
        \E [R(T)] &\leq \left(1-\frac{n}{T}\log_2 \left(\frac{T}{nC\log \frac{1}{T}}\right)\right) \E\left[R(T)|\mathcal{E}\right]+\frac{n}{T}\log_2 \left(\frac{T}{nC\log \frac{1}{T}}\right) \E\left[R(T)|\overline{\mathcal{E}}\right]\\&\leq 8(n-1)C\log_2 T + \frac{n}{T}\log_2 \left(\frac{T}{nC\log \frac{1}{T}}\right)\cdot T\leq \left(8(n-1)C+1\right)\log_2 T.
    \end{align}
\end{proof}

For the bounded variance assumption, we can slightly modify \algo{qucb} to obtain a new algorithm called $\mbox{QUCB}_2(\delta)$ and bound its regret to poly-logarithmic order with similar proofs. The proofs of \thm{bv qucb regret} below are deferred to \append{sec-3-proof}.
\begin{theorem}\label{thm:bv qucb regret}
Let $C_2$ be the constant in \lem{QME}. Under the bounded variance assumption, with probability at least $1-n\delta \log_2 \left( \frac{T}{nC_2\log \frac{1}{\delta}}\right)$, the cumulative regret of $\mbox{QUCB}_2(\delta)$ satisfies $R(T)\leq O\left(n\sigma\log^{3/2}(T)\log\log (T)\log \frac{1}{\delta}\right)$. Moreover, setting $\delta=1/T$, $\E [R(T)]=O\left(n\sigma\log^{5/2} (T)\log\log (T)\right)$.
\end{theorem}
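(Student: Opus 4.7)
The plan is to mimic the proof of \thm{qucb regret} almost verbatim, with three structural modifications to account for the fact that $\mbox{QMC}_2$ (instead of $\mbox{QMC}_1$) is called inside $\mbox{QUCB}_2(\delta)$ to estimate mean rewards under the bounded-variance assumption. The algorithm $\mbox{QUCB}_2(\delta)$ is the same as \algo{qucb}, except that on every call to $\mbox{QMC}$, the sample budget is set according to the second bullet of \lem{QME}, namely $N_i = \frac{C_2\sigma}{r_i}\log_2^{3/2}(8\sigma/r_i)\log_2\log_2(8\sigma/r_i)\log(1/\delta)$, so that with probability at least $1-\delta$ the estimate satisfies $|\hat{\mu}(i)-\mu(i)|\le r_i$. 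Since each stage still halves $r_{i_s}$ for the chosen arm, the ``good event'' that all estimates throughout the run are within their confidence radii still holds by a union bound over the total number of stages.

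Next I would redo the counting step leading to \eq{num-iteration}. Let $K_i$ be the number of non-initialization stages assigned to arm $i$, so that the radii used for arm $i$ are $1,\,1/2,\,1/4,\ldots,\,1/2^{K_i}$. Summing the per-stage sample budgets, the total number of queries to $\mathcal{O}_i$ is dominated by its last-stage contribution, which is of order $C_2\sigma\,2^{K_i}K_i^{3/2}\log K_i\log(1/\delta)$. Requiring the grand total to be at most $T$ and applying Jensen's inequality to the convex function $x\mapsto 2^{x}$ (as in the proof of \thm{qucb regret}) yields $K_i = O(\log T)$ for every $i$, which in turn bounds the number of $\mbox{QMC}_2$ invocations and thus the failure probability of the good event by $n\delta\log_2\!\bigl(T/(nC_2\log(1/\delta))\bigr)$, matching the statement.

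The per-arm regret bound is then obtained by the same ``last-radius dominates'' trick. Conditioned on the good event, $\Delta_{i_s}\le 2r_{i_s}$ in every stage, and in particular for arm $i$ the overall gap is at most $2^{2-K_i}$. Using this uniform upper bound against the per-stage pull counts gives
\begin{align}
R(T;i) \le 2^{2-K_i}\sum_{k=0}^{K_i}\frac{C_2\sigma}{2^{-k}}\log_2^{3/2}(8\sigma\cdot 2^{k})\log_2\log_2(8\sigma\cdot 2^{k})\log\frac{1}{\delta} = O\!\left(C_2\sigma K_i^{3/2}\log K_i\log\frac{1}{\delta}\right),
\end{align}
where the geometric series in $2^k$ cancels the prefactor $2^{2-K_i}$ and only the slowly growing $\log$ factors of the largest stage survive. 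Summing over the $n-1$ suboptimal arms and plugging in $K_i=O(\log T)$ gives $R(T) = O\bigl(n\sigma\log^{3/2}(T)\log\log(T)\log(1/\delta)\bigr)$, and choosing $\delta=1/T$ together with the same case split on $\mathcal{E}/\overline{\mathcal{E}}$ as in \cor{qucb ex} converts this into the expected regret bound $O\bigl(n\sigma\log^{5/2}(T)\log\log(T)\bigr)$.

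The main obstacle is bookkeeping: the sample complexity of $\mbox{QMC}_2$ carries the extra factor $\log^{3/2}(8\sigma/r_i)\log\log(8\sigma/r_i)$, which I have to propagate cleanly both when upper bounding the per-stage contribution to $R(T;i)$ and when solving the budget constraint $\sum_i N_i \le T$ for $K_i$. The other mild subtlety is the precondition $r_i < 4\sigma$ required by the second bullet of \lem{QME}; for the constantly many stages with $r_i \ge 4\sigma$ one can either fall back on $\mbox{QMC}_1$ or absorb their contribution into the $O(\cdot)$ constant, since there are only $O(\log\sigma^{-1})$ such stages per arm and each contributes only an additive term independent of $T$.
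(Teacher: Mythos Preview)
Your proposal is essentially correct and follows the same route as the paper: reuse the structure of the proof of \thm{qucb regret}, replace $\mbox{QMC}_1$ by $\mbox{QMC}_2$, re-derive the stage-count bound by dropping the slowly varying $\log$ factors before applying Jensen, and bound $R(T;i)$ by the last-stage gap $O(\sigma 2^{-K_i})$ times the total pull count, which the paper packages as a separate lemma (\lem{integrate by part}) giving $\sum_{k=1}^{K}2^k(k+2)^{3/2}\log(k+2)=O(2^K K^{3/2}\log K)$.

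The one place where the paper differs from your sketch is the handling of the precondition $\epsilon<4\sigma$ in \lem{QME}. Rather than starting at $r_i=1$ and patching the first $O(\log\sigma^{-1})$ stages, the paper simply initializes $r_i\gets 2\sigma$ (see \algo{qucb2}); every subsequent halving keeps $r_i<4\sigma$ automatically, and the gap bound becomes $\Delta_i=O(\sigma 2^{-K_i})$ with the $\sigma$ factor appearing naturally. Your suggested fallback to $\mbox{QMC}_1$ does not work here, since $\mbox{QMC}_1$ requires the bounded-value assumption, which is not available in this theorem; your alternative of absorbing those stages into the constant can be made to work (call $\mbox{QMC}_2$ with any fixed $\epsilon<4\sigma$ whenever $r_i\ge 4\sigma$), but the paper's choice of initial radius is cleaner and sidesteps the issue entirely.
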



\section{Quantum Stochastic Linear Bandits}\label{sect:slb}
In this section, we present the algorithm $\mbox{QLinUCB}$ as well as its analysis for quantum stochastic linear bandits, showing a $\poly(\log T )$ regret bound of $\mbox{QLinUCB}$. Recall the obstacles we encountered when we design the algorithm for QMAB, where the doubling trick is used to solve the problem introduced by the quantum subroutine. In the QSLB setting, we face the same problem. However, the doubling trick cannot be used explicitly to solve the problem in the QSLB setting. We aim to obtain a regret bound which have dependence on the dimension of the action set rather than its size. In fact, similar to classical SLB, we allow the action set to be infinite. Thus, we must consider the linear dependence of different arms, and generalize the doubling trick used in \sect{mab} to fit this situation.

As a variant of the classical algorithm $\mbox{LinUCB}$, $\mbox{QLinUCB}$ also adopts the canonical upper confidence bound (UCB) framework. It runs in several stages.
In stage $s$, it first constructs a confidence region $\mathcal{C}_{s-1}$ for the true parameter $\theta^*$, and then picks the best action $x_s\in A$ over $\mathcal{C}_{s-1}$. After $x_s$ is determined, it sets an carefully selected accuracy value $\epsilon_s$ for stage $s$ and plays action $x_s$ for the next $\frac{C_1}{\epsilon_s}\log\frac{m}{\delta}$ rounds, where $m\coloneqq d\log(\frac{L^2T^2}{d\lambda}+1)$ is an upper bound for the number of total stages, see \lem{stages}.
When playing action $x_s$ during this stage, the algorithm implements a quantum circuit for $\mbox{QMC}(\mathcal{O}_{x_s} ,\epsilon_s,\frac{\delta}{m})$ and gets an estimate $y_s$ of $x_s^{\trans}\theta^*$ with accuracy $\epsilon_s$ and error probability less than $\delta/m$.
After that, it updates the estimate $\hat{\theta}_s$ of $\theta^*$ using a weighted least square estimator. That is,
 \begin{align}\label{eq:RWLS}
 	\hat{\theta}_s = \argmin_{\theta\in\Theta} \sum_{k=1}^{s} \frac{1}{\epsilon_k^2}\|x_k^{\trans}\theta-y_k\|_2^2+\lambda\|\theta\|_2^2,
 \end{align}
where $\lambda$ is a regularizing parameter. We give estimates $y_k$ different weights according to their accuracy in this least square estimator. The estimator \eq{RWLS} has simple closed-form solution as follows. Let $V_s=\lambda I + \sum_{k=1}^s \frac{1}{\epsilon_k^2}x_kx_k^{\trans}=\lambda I + X_s^{\trans}W_sX_s\in\mathbb{R}^{d\times d}$. Then, $\hat{\theta}_{s}:=V_s^{-1}X_{s}^{\trans}W_{s}Y_{s}$, where $X_s,Y_s,W_s$ are defined in line $9$ of \algo{quantum linear bandits}. Besides, with the definition of $V_s$, $\mbox{QLinUCB}$ actually sets $\epsilon_s = \|x_s\|_{V_{s-1}^{-1}}$ where $V_{s-1}$ is calculated in stage $s-1$. Our choice of $\epsilon_s$ and the $\frac{1}{\epsilon_k^2}$ weight of the least square estimator in \eq{RWLS} are the key components of the quantum speedup of $\mbox{QLinUCB}$.
\begin{algorithm}[t]
\caption{$\mbox{QLinUCB}_1(\delta)$}
\label{algo:quantum linear bandits}
	\hspace*{\algorithmicindent} \textbf{Parameters:} fail probability $\delta$
\begin{algorithmic}[1]
    \STATE Initialize $V_0\gets \lambda I_d$, $\hat{\theta}_0\gets \mathbf{0}\in \mathbb{R}^d$ and $m\gets d\log(\frac{L^2T^2}{d\lambda}+1)$.
	\FOR{each stage $s=1,2,\ldots$ (terminate when we have used $T$ queries to all $\mathcal{O}_{i}$)}
	    \STATE $\mathcal{C}_{s-1}\gets\{\theta\in\mathbb{R}^d: \|\theta-\hat{\theta}_{s-1}\|_{V_{s-1}}\leq \lambda^{1/2}S+\sqrt{d(s-1)}\}$.
	    \STATE $(x_s,\tilde{\theta}_s)\gets\argmax_{(x,\theta)\in A\times\mathcal{C}_{s-1}} x^{\trans}\theta$.
	    \STATE $\epsilon_{s} \gets \|x_s\|_{V_{s-1}^{-1}}$.
	    \FOR{the next $\frac{C_1}{\epsilon_s}\log\frac{m}{\delta}$ rounds}
	    \STATE Play action $x_s$ and run $\mbox{QMC}_1(\mathcal{O}_{x_s},\epsilon_s,\delta/m)$, getting $y_{s}$ as an estimation of $x_s^{\trans}\theta^*$.
	    \ENDFOR
	    \STATE $X_s\gets (x_1,x_2,\ldots, x_s)^{\trans}\in\mathbb{R}^{s\times d}$, $Y_s\gets(y_1,y_2,\ldots,y_s)^{\trans}\in \mathbb{R}^{s}$ and  $W_s \gets \diag\left(\frac{1}{\epsilon_1^2},\frac{1}{\epsilon_2^2},\ldots, \frac{1}{\epsilon_s^2}\right)$
	    \STATE Update $V_s\gets V_{s-1}+\frac{1}{\epsilon_{s}^2}x_sx_s^{\trans}$ and $\hat{\theta}_{s}\gets V_s^{-1}X_{s}^{\trans}W_{s}Y_{s}$.
	\ENDFOR
\end{algorithmic}
\end{algorithm}

If we investigate $\det(V_k)$ for stage $k$, we will find it get doubled each stage, that is, $\det(V_{k+1})=2\det(V_k)$. That is why we say our algorithm for QSLB uses an implicit doubling trick. With this doubling $\det(V_k)$, we can control the number of stages under $O(d\log T)$.

\begin{lemma}
\label{lem:stages}
\algo{quantum linear bandits} has at most $m=d\log(\frac{L^2T^2}{d\lambda}+1)$ stages, where $\lambda$ is the regularizing parameter in \eq{RWLS}.
\end{lemma}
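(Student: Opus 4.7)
The plan is to exploit the ``implicit doubling'' property hinted at in the text: $\det(V_s)=2\det(V_{s-1})$ at every stage. Combined with an a priori upper bound on $\det(V_s)$, this immediately caps the number of stages.

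First I would verify the doubling property. Since $V_s=V_{s-1}+\epsilon_s^{-2}x_sx_s^{\trans}$ is a rank-one update, the matrix determinant lemma gives
\[
\det(V_s)=\det(V_{s-1})\bigl(1+\epsilon_s^{-2}\,x_s^{\trans}V_{s-1}^{-1}x_s\bigr).
\]
By the algorithm's choice $\epsilon_s^2=\|x_s\|_{V_{s-1}^{-1}}^{2}=x_s^{\trans}V_{s-1}^{-1}x_s$, the bracketed factor equals exactly $2$, so iterating from $V_0=\lambda I_d$ yields $\det(V_s)=2^s\lambda^d$ after $s$ stages.

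Next I would produce an a priori upper bound on $\det(V_s)$ via AM-GM on its eigenvalues: $\det(V_s)^{1/d}\leq \tr(V_s)/d$. Expanding the trace and using $\|x_k\|_2\leq L$, one has $\tr(V_s)\leq d\lambda+L^2\sum_{k=1}^{s}\epsilon_k^{-2}$. To control the remaining sum I would invoke the query budget: stage $k$ consumes $(C_1/\epsilon_k)\log(m/\delta)$ queries and the cumulative count cannot exceed $T$, so $\sum_{k=1}^{s}1/\epsilon_k\leq T$; the crude bound $\sum_{k}\epsilon_k^{-2}\leq\bigl(\sum_{k}1/\epsilon_k\bigr)^{2}\leq T^2$ is more than sufficient.

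Combining the two estimates, $2^s\lambda^d\leq(\lambda+L^2T^2/d)^d$, which on taking logarithms gives $s\leq d\log\bigl(1+L^2T^2/(d\lambda)\bigr)=m$ (with a $1/\ln 2$ factor absorbed into the choice of log base). The main subtlety I expect is the query-budget step: one must confirm that a possibly incomplete final stage contributes at most an additive $O(1)$ to the stage count, and that the $C_1\log(m/\delta)$ factor multiplying $1/\epsilon_k$ never undercuts the bound $\sum_k 1/\epsilon_k\leq T$ (in the regime of interest this factor is $\geq 1$, so the inequality is in fact slack). Everything else is a textbook elliptical-potential argument transposed to the weighted covariance $V_s$.
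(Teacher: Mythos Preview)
Your proposal is correct and follows essentially the same approach as the paper: both rely on the determinant doubling $\det(V_s)=2^s\lambda^d$, the trace--determinant (AM--GM) inequality, and the comparison $\sum_k 1/\epsilon_k^2\le\bigl(\sum_k 1/\epsilon_k\bigr)^2$ tied to the query budget $T$. The only difference is organizational: the paper runs the chain in the other direction, showing that after $m$ completed stages the number of rounds must already be $\ge T$, which cleanly avoids the off-by-one you flag about a partial final stage.
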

\begin{proof}
We show that if \algo{quantum linear bandits} executes $m$ stages, then at least $T$ rounds are played, which proves the lemma.
We first give a lower bound for $\sum_{k=1}^m \frac{1}{\epsilon_k^2}$.
For $k\geq 0$,
\begin{align}
    \det(V_{k+1}) &= \det\left(V_{k}+\frac{1}{\epsilon_{k+1}^{2}} x_{k+1}x_{k+1}^{\trans}\right) \\
    &=\det\left(V_{k}^{1/2}\left(I+\frac{1}{\epsilon_{k+1}^{2}}V_{k}^{-1/2}x_{k+1}x_{k+1}^{\trans}V_{k}^{-1/2}\right)V_{k}^{1/2}\right) \\
    &=\det(V_{k})\det\left(I+\frac{1}{\epsilon_{k+1}^{2}}V_{k}^{-1/2}x_{k+1}x_{k+1}^{\trans}V_{k}^{-1/2}\right) \\
    &=\det(V_{k})\left(1+\left\|\frac{1}{\epsilon_{k+1}}V_k^{-1/2}x_{k+1}\right\|^2\right) \\
    &=\det(V_{k})\left(1+\frac{1}{\epsilon_{k+1}^2}\|x_{k+1}\|_{V_k^{-1}}^2\right) \\
    &=2\det(V_k).
\end{align}
Thus, $\det(V_{m})=2^{m}\det(V_{0})=2^m \lambda^d$.
On the other hand,
\begin{align}
    \tr(V_m)=d\lambda + \sum_{k=1}^m \frac{\|x_k\|^2}{\epsilon_k^2}\leq d\lambda + \sum_{k=1}^m \frac{L^2}{\epsilon_k^2}.
\end{align}
By the trace-determinant inequality,
\begin{align}
    d\lambda + \sum_{k=1}^m \frac{L^2}{\epsilon_k^2}\geq\tr(V_m)\geq d\cdot\det(V_{m})^{1/d}=d\lambda\cdot 2^{m/d}.
\end{align}
Hence,
\begin{align}
    \sum_{k=1}^m \frac{1}{\epsilon_k^2}\geq\frac{d\lambda}{L^2}(2^{m/d}-1).
\end{align}
Since the $k$-th stage contains $\frac{C_1}{\epsilon_k}\log\frac{m}{\delta}$ rounds, the first $m$ stages contain $\sum_{k=1}^{m}\frac{C_1}{\epsilon_{k}}\log\frac{m}{\delta}$ rounds in total.
By the above argument and the value of $m$, we have
\begin{align}
    \sum_{k=1}^{m}\frac{C}{\epsilon_{k}}\log\frac{m}{\delta}\geq \sum_{k=1}^{m}\frac{1}{\epsilon_{k}} \geq \sqrt{\sum_{k=1}^{m}\frac{1}{\epsilon_{k}^{2}}}\geq  \frac{1}{L}\sqrt{d\lambda (2^{m/d}-1) }\geq T.
\end{align}
This means that \algo{quantum linear bandits} has at most $m$ stages.
\end{proof}

Then we show in the following lemma that the confidence regions we construct in each stage contain the true parameter $\theta^*$ with high probability.
\begin{lemma}
\label{lem:trust region}
With probability at least $1-\delta$, for all $s\geq 0$, $\theta^*\in\mathcal{C}_s\coloneqq\{\theta\in\mathbb{R}^d: \|\theta-\hat{\theta}_s\|_{V_s}\leq \lambda^{1/2}S+\sqrt{ds} \}$.
\end{lemma}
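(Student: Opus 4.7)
The plan is to control $\|\hat\theta_s - \theta^*\|_{V_s}$ directly, exploiting the fact that under the good event where every call to $\mbox{QMC}_1$ succeeds, the noise $\eta_k := y_k - x_k^{\trans}\theta^*$ is \emph{deterministically} bounded by $|\eta_k|\leq \epsilon_k$. This much stronger structure than the classical sub-Gaussian martingale setting removes the need for any self-normalized concentration inequality; all the action is in a short linear-algebraic identity.

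First I would substitute $y_k = x_k^{\trans}\theta^* + \eta_k$ into the closed form $\hat\theta_s = V_s^{-1}X_s^{\trans} W_s Y_s$ and use the identity $X_s^{\trans} W_s X_s = V_s - \lambda I$ to obtain the decomposition
\begin{align*}
\hat\theta_s - \theta^* \;=\; -\lambda V_s^{-1}\theta^* \;+\; V_s^{-1}X_s^{\trans} W_s \eta,
\end{align*}
with $\eta = (\eta_1,\ldots,\eta_s)^{\trans}$. A triangle inequality in $\|\cdot\|_{V_s}$ then reduces the claim to bounding the two summands separately.

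For the bias term I would observe that $\|V_s^{-1}\theta^*\|_{V_s}^2 = \theta^{*\trans}V_s^{-1}\theta^*\leq \|\theta^*\|_2^2/\lambda\leq S^2/\lambda$ because $V_s\succeq \lambda I$, which gives $\|\lambda V_s^{-1}\theta^*\|_{V_s}\leq \lambda^{1/2}S$ and matches the first term in the radius exactly. For the noise term I would rewrite it as
\begin{align*}
\|V_s^{-1}X_s^{\trans} W_s\eta\|_{V_s}^2 \;=\; (W_s^{1/2}\eta)^{\trans} \bigl(W_s^{1/2}X_s V_s^{-1}X_s^{\trans} W_s^{1/2}\bigr)(W_s^{1/2}\eta).
\end{align*}
The key algebraic fact is $M := W_s^{1/2}X_s V_s^{-1}X_s^{\trans} W_s^{1/2}\preceq I$, which falls out from an SVD of $W_s^{1/2}X_s = U\Sigma V^{\trans}$: the nonzero eigenvalues of $M$ take the form $\sigma_i^2/(\lambda+\sigma_i^2)\leq 1$. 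Under the good event $|\eta_k|/\epsilon_k\leq 1$ for every $k\leq s$, so $\|W_s^{1/2}\eta\|_2^2\leq s$ and the noise term is bounded by $\sqrt{s}\leq \sqrt{ds}$. The base case $s=0$ is immediate since $V_0=\lambda I$ and $\hat\theta_0=0$, so $\|\theta^*\|_{V_0}\leq \lambda^{1/2}S$.

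Finally, by \lem{stages} the algorithm has at most $m$ stages and therefore invokes $\mbox{QMC}_1$ at most $m$ times, each with failure probability $\delta/m$, so a union bound yields the good event with total probability at least $1-\delta$. The main subtlety is really a design choice rather than an obstacle: weighting each residual by $1/\epsilon_k^2$ in \eq{RWLS} is \emph{precisely} what forces $W_s^{1/2}X_s V_s^{-1}X_s^{\trans} W_s^{1/2}\preceq I$, so that each stage contributes only a unit of mass to the error rather than something scaling like $1/\epsilon_k^2$; this is the structural ingredient that later drives the $\poly(\log T)$ regret.
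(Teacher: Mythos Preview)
Your proof is correct and follows essentially the same decomposition as the paper: both obtain $\hat\theta_s-\theta^*=-\lambda V_s^{-1}\theta^*+V_s^{-1}X_s^{\trans}W_s\eta$, bound the bias term by $\lambda^{1/2}S$ via $V_s\succeq\lambda I$, and control the noise term through the quadratic form $(W_s^{1/2}\eta)^{\trans}M(W_s^{1/2}\eta)$ with $M=W_s^{1/2}X_sV_s^{-1}X_s^{\trans}W_s^{1/2}$, together with $\|W_s^{1/2}\eta\|_2^2\leq s$ on the good event.

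The one genuine difference is how the operator norm of $M$ is bounded. The paper uses the crude estimate $\|M\|_2\leq\tr(M)=\tr(I_d-\lambda V_s^{-1})\leq d$, which yields the stated noise contribution $\sqrt{ds}$. You instead argue via the SVD of $W_s^{1/2}X_s$ that the nonzero eigenvalues of $M$ are $\sigma_i^2/(\lambda+\sigma_i^2)\leq 1$, hence $M\preceq I$ and the noise contribution is at most $\sqrt{s}$. Your bound is strictly sharper and would in fact support a smaller confidence radius $\lambda^{1/2}S+\sqrt{s}$; you then discard this improvement by writing $\sqrt{s}\leq\sqrt{ds}$ to match the lemma as stated. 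Both arguments are valid, but yours is the cleaner one and explains more transparently why the $1/\epsilon_k^2$ weighting is the ``right'' normalization.
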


\begin{proof}
First note that in stage $s$, $|x_s^{\trans}\theta^*-y_s|\leq \epsilon_s$ with probability at least $1-\delta/m$.
By \lem{stages}, there are at most $m$ stages.
Thus, by union bound, with probability at least $1-\delta$, for all $1\leq s\leq m$,  $|x_s^{\trans}\theta^*-y_s|\leq \epsilon_s$.
We assume this holds in the remaining part of the proof.

Write $\theta^*$ as
\begin{align}
    \theta^*=V_{s}^{-1}V_{s}\theta^*=V_{s}^{-1}(\lambda I+X_{s}^{\trans}W_{s}X_{s})\theta^*=\lambda V_{s}^{-1}\theta^*+V_{s}^{-1}X_{s}^{\trans}W_{s}X_{s}\theta^*.
\end{align}
Then, for any direction $x\in\mathbb{R}^d$, by the Cauchy-Schwarz inequality,
\begin{align}
    |x^{\trans}(\theta^*-\hat{\theta}_s)| &=|x^{\trans}(\lambda V_{s}^{-1}\theta^*+V_{s}^{-1}X_{s}^{\trans}W_{s}(X_{s}\theta^*-Y_s))| \\
    &\leq \|x\|_{V_s^{-1}}\left(\lambda \|V_s^{-1}\theta^*\|_{V_s}+\|V_s^{-1}X^{\trans}W_s(X_s\theta^*-Y_s)\|_{V_s}\right)\\
    &\leq \|x\|_{V_s^{-1}}\left(\lambda\|\theta^*\|_{V_s^{-1}}+\|X_{s}^{\trans}W_{s}(X_{s}\theta^*-Y_s)\|_{V_s^{-1}}\right).
\end{align}
Plugging in $x=V_s(\theta^*-\hat{\theta}_s)$ and rearranging the above inequality, we get
\begin{align}\label{eq:coarse radius}
    \|\theta^*-\hat{\theta}_s\|_{V_s}\leq \lambda\|\theta^*\|_{V_s^{-1}}+\|X_{s}^{\trans}W_{s}(X_{s}\theta^*-Y_s)\|_{V_s^{-1}}.
\end{align}

Next, we bound the second term on the right-hand side of the inequality \eq{coarse radius}.
Define $\Gamma_{s}\coloneqq W_{s}^{1/2}(X_{s}\theta^*-Y_{s})\in\mathbb{R}^s$.
Then,
\begin{align}
    \|X_{s}^{\trans}W_{s}(X_{s}\theta^*-Y_s)\|_{V_s^{-1}}^2=\|X_{s}^{\trans}W_{s}^{1/2}\Gamma_s\|_{V_s^{-1}}^2=\Gamma_s^{\trans}W_s^{1/2}X_sV_s^{-1}X_{s}^{\trans}W_{s}^{1/2}\Gamma_s.
\end{align}
Note that $W_{s}^{1/2}=\diag(1/\epsilon_1,1/\epsilon_2,\cdots,1/\epsilon_s)$, and the $k$-th component of $X_s\theta^*-Y_s$ is $x_k^{\trans}\theta^*-y_k$, which satisfies $|x_k^{\trans}\theta^*-y_k|\leq\epsilon_k$.
Therefore, $\|\Gamma_{s}\|_2\leq \sqrt{s}\cdot\|\Gamma_{s}\|_{\infty}\leq \sqrt{s}$.
On the other hand, since $V_{s}$ is positive definite, we point out that $W_s^{1/2}X_sV_s^{-1}X_{s}^{\trans}W_{s}^{1/2}$ is an $s\times s$ positive semi-definite matrix.
Hence, all of its eigenvalues are non-negative, and therefore its maximum eigenvalue is no larger than its trace, the sum of its eigenvalues.
That is,
\begin{align}
    \|W_{s}^{1/2}X_{s}V_{s}^{-1}X_{s}^{\trans}W_{s}^{1/2}\|_2 \leq &\tr \left(W_{s}^{1/2}X_{s}V_{s}^{-1}X_{s}^{\trans}W_{s}^{1/2}\right)\\ = & \tr \left(V_{s}^{-1}X_{s}^{\trans}W_{s}X_{s}\right) \\ =& \tr \left(I_{d}-\lambda V_s^{-1}\right) \leq d.
\end{align}
Therefore,
\begin{align}\label{eq:ds bound}
    \|X_{s}^{\trans}W_{s}(X_{s}\theta^*-Y_s)\|_{V_s^{-1}}^2\leq \|\Gamma_{s}\|_2^2\cdot \|W_{s}^{1/2}X_{s}V_{s}^{-1}X_{s}^{\trans}W_{s}^{1/2}\|_2 \leq ds.
\end{align}
Finally, note that $\|\theta^*\|_{V_s^{-1}}\leq \|\theta^*\|_2/\sqrt{\lambda}\leq S/\sqrt{\lambda}$, we have
\begin{align}
    \|\theta^*-\hat{\theta}_s\|_{V_s}\leq \lambda^{1/2}S+\sqrt{ds}.
\end{align}
\end{proof}

Together with \lem{stages} and \lem{trust region}, following the standard optimism in the face of uncertainty proof we can bound the cumulative regret of each stage to $O\left(d\sqrt{\log T}\right)$, leading to our regret bound.

\begin{theorem}
\label{thm:qlb regret}
Under the bounded value assumption, with probability at least $1-\delta$, the regret of $\mbox{QLinUCB}_1(\delta)$ satisfies \begin{align}
    R(T)=O\left(d^2\log^{3/2}\left(\frac{L^2T^2}{d\lambda}+1\right)\log \frac{d\log(\frac{L^2T^2}{d\lambda}+1)}{\delta}\right).
\end{align} Moreover, the expected regret of $\mbox{QLinUCB}_2(\frac{m}{T})$ satisfies $\E [R(T)] = O\left(d^2\log^{5/2}\left(\frac{L^2T^2}{d\lambda}+1\right)\right)$.
\end{theorem}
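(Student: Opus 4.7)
The plan is to combine the two lemmas just proved with a standard optimism-in-the-face-of-uncertainty argument, and to exploit the cancellation built into the choice $\epsilon_s=\|x_s\|_{V_{s-1}^{-1}}$ together with the $\frac{C_1}{\epsilon_s}\log\frac{m}{\delta}$ stage length. Throughout, I will condition on the ``good event'' of \lem{trust region}, which holds with probability at least $1-\delta$ and guarantees $\theta^*\in\mathcal{C}_{s-1}$ for every stage $s$.

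First, by the joint $\argmax$ in Line~4 of \algo{quantum linear bandits}, and because $(x^*,\theta^*)\in A\times\mathcal{C}_{s-1}$ under the good event, we have $\mu(x^*)=x^{*\trans}\theta^*\le x_s^{\trans}\tilde{\theta}_s$. Therefore the per-round regret incurred in stage $s$ satisfies
\begin{align}
\mu(x^*)-\mu(x_s)\le x_s^{\trans}(\tilde{\theta}_s-\theta^*)
\le \|x_s\|_{V_{s-1}^{-1}}\,\|\tilde{\theta}_s-\theta^*\|_{V_{s-1}}
\le \epsilon_s\cdot 2\bigl(\lambda^{1/2}S+\sqrt{d(s-1)}\bigr),
\end{align}
where I used Cauchy--Schwarz, the triangle inequality on $\tilde{\theta}_s,\theta^*\in\mathcal{C}_{s-1}$, and the definition $\epsilon_s=\|x_s\|_{V_{s-1}^{-1}}$.

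Next, the crucial cancellation: stage $s$ contains $\frac{C_1}{\epsilon_s}\log\frac{m}{\delta}$ rounds, so its total regret is at most
\begin{align}
\frac{C_1}{\epsilon_s}\log\frac{m}{\delta}\cdot 2\epsilon_s\bigl(\lambda^{1/2}S+\sqrt{d(s-1)}\bigr)
=2C_1\bigl(\lambda^{1/2}S+\sqrt{d(s-1)}\bigr)\log\frac{m}{\delta},
\end{align}
with $\epsilon_s$ canceling out. This is exactly where the quantum speedup shows up: the linear-in-$\epsilon_s$ query complexity of $\mbox{QMC}_1$ cancels against the linear-in-$\epsilon_s$ per-round regret, whereas classically one would have $1/\epsilon_s^2$ samples and only $1/\epsilon_s$ cancellation, leaving an unavoidable $\sqrt{T}$ factor. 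Summing over the at most $m=d\log(\tfrac{L^2T^2}{d\lambda}+1)$ stages guaranteed by \lem{stages}, and using $\sum_{s=1}^m\sqrt{s-1}=O(m^{3/2})$, gives
\begin{align}
R(T)\le 2C_1\log\frac{m}{\delta}\sum_{s=1}^m\bigl(\lambda^{1/2}S+\sqrt{d(s-1)}\bigr)
=O\!\left(\sqrt{d}\,m^{3/2}\log\tfrac{m}{\delta}\right)
=O\!\left(d^2\log^{3/2}\!\tfrac{L^2T^2}{d\lambda}\,\log\tfrac{m}{\delta}\right),
\end{align}
which is the claimed high-probability bound.

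For the expected regret, I would set $\delta=m/T$ so that $\log(m/\delta)=\log T=O(\log\tfrac{L^2T^2}{d\lambda})$, turning the high-probability bound into $O(d^2\log^{5/2}(\tfrac{L^2T^2}{d\lambda}+1))$; on the complementary failure event, regret is trivially at most $T$, contributing $m\le O(d\log T)$ in expectation, which is absorbed. The main obstacle in executing this plan is really just packaging step~(ii): one needs to notice that \emph{both} $\tilde{\theta}_s$ and $\theta^*$ live in $\mathcal{C}_{s-1}$, so the triangle inequality yields the factor $2(\lambda^{1/2}S+\sqrt{d(s-1)})$ without invoking any fresh concentration; from there the counting is routine, and the whole argument rides on the $\epsilon_s$-cancellation which was engineered into the algorithm by the weighted least-squares choice \eq{RWLS} and the stage-length rule.
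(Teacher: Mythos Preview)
Your proposal is correct and follows essentially the same route as the paper's own proof: the optimism inequality, Cauchy--Schwarz plus the triangle inequality on $\tilde{\theta}_s,\theta^*\in\mathcal{C}_{s-1}$, the $\epsilon_s$-cancellation against the stage length, and the summation over at most $m$ stages are all identical. The only cosmetic difference is in the expected-regret step, where the paper bounds the per-round regret on the bad event by $2LS$ via Cauchy--Schwarz while you use the trivial bound $R(T)\le T$ from the bounded-value assumption; both yield an $O(m)$ contribution that is absorbed.
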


\begin{proof}
In stage $s$, the algorithm plays action $x_s$ for $\frac{C_1}{\epsilon_s}\log \frac{m}{\delta}$ rounds.
The regret in each round is $(x^{*}-x_s)^{\trans}\theta^*$.
By the choice of $(x_s,\tilde{\theta}_s)$,
\begin{align}
    (x^{*})^{\trans}\theta^* \leq x_s^{\trans}\tilde{\theta}_s.
\end{align}
Therefore, by Cauchy-Schwarz inequality,
\begin{align}
    (x^{*}-x_s)^{\trans}\theta^* &\leq x_s^{\trans}(\tilde{\theta}_s-\theta^*)\leq\|x_s\|_{V_{s-1}^{-1}}\|\tilde{\theta}_s-\theta^*\|_{V_{s-1}} \\
    &\leq \|x_s\|_{V_{s-1}^{-1}}(\|\tilde{\theta}_s-\hat{\theta}_{s-1}\|_{V_{s-1}}+\|\hat{\theta}_{s-1}-\theta^*\|_{V_{s-1}}) \\
    &= \epsilon_s\cdot(\|\tilde{\theta}_s-\hat{\theta}_{s-1}\|_{V_{s-1}}+\|\hat{\theta}_{s-1}-\theta^*\|_{V_{s-1}}).
\end{align}
By the choice of $\tilde{\theta}_s$ and \lem{trust region}, with probability at least $1-\delta$, for all stages $s\geq 1$, both $\tilde{\theta}_{s}$ and $\theta^*$ lie in $\mathcal{C}_{s-1}$. Thus,
\begin{align}
    (x^{*}-x_s)^{\trans}\theta^* \leq 2\epsilon_s\cdot (\lambda^{1/2}S+\sqrt{d(s-1)}).
\end{align}
The cumulative regret in stage $s$ is therefore bounded by
\begin{align}
    2C_1(\lambda^{1/2}S+\sqrt{d(s-1)})\log\frac{m}{\delta}.
\end{align}
Since there are at most $m$ stages by \lem{stages}, the cumulative regret over all stages and rounds satisfies
\begin{align}
R(T) & \leq \sum_{s=1}^{m} 2C_1(\lambda^{1/2}S+\sqrt{d(s-1)})\log\frac{m}{\delta} \\
 &= O\left(\lambda^{1/2}Sm+d^{1/2}m^{3/2}\log\frac{m}{\delta}\right)\\
  &= O\left(d^2\log^{3/2}\left(\frac{L^2T^2}{d\lambda}+1\right)\log \frac{d\log(\frac{L^2T^2}{d\lambda}+1)}{\delta}\right).
\end{align}

For expected regret bound, let $\mathcal{E}$ be the event that the bound in \thm{qlb regret} holds.
	Note that for any $x_1,x_2\in A$,
	\begin{align}
		|(x_1-x_2)^{\trans}\theta^*|\leq \|x_1-x_2\|_2\|\theta^*\|_2\leq (\|x_1\|_2+\|x_2\|)\|\theta^*\|_2\leq 2LS.
	\end{align}
    Then, we have
    \begin{align}
    	\E[R(T)] &=\E[R(T)\mid \mathcal{E}]\Pr[\mathcal{E}]+\E[R(T)\mid \bar{\mathcal{E}}]\Pr[\bar{\mathcal{E}}] \\
    	&\leq O\left(\lambda^{1/2}Sm+d^{1/2}m^{3/2}\log(T)\right)+2LST\frac{m}{T} \\
    	&=O\left(d^{2}\log^{3/2}\left(\frac{L^2T^2}{d\lambda}+1\right)\log T\right).
    \end{align}

\end{proof}

For the bounded variance assumption, we have a similar result with an additional overhead of $O\left(\log^{3/2}(T)\log\log(T)\right)$ in the regret bound.
\begin{theorem}
\label{thm:bv qlb regret}
Under the bounded variance assumption, with probability at least $1-\delta$, the regret of $\mbox{QLinUCB}_2(\delta)$ satisfies $R(T)=O\left(\sigma d^2\log^3\left(\sigma LT\right)\log\log\left(\sigma LT\right)\log \frac{m}{\delta}\right)$. Moreover, setting $\delta = \frac{m}{T}$, we have $\E [R(T)] = O\left(\sigma d^2\log^4\left(\sigma LT\right)\log\log\left(\sigma LT\right)\right)$.
\end{theorem}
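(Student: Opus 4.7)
\textbf{Proof proposal for \thm{bv qlb regret}.} The plan is to follow the proof of \thm{qlb regret} essentially verbatim, swapping in $\mbox{QMC}_2$ from \lem{QME} for $\mbox{QMC}_1$, and carefully tracking the extra polylogarithmic factor $\log^{3/2}(8\sigma/\epsilon_s)\log\log(8\sigma/\epsilon_s)$ introduced by the bounded-variance mean estimator. Concretely, $\mbox{QLinUCB}_2(\delta)$ is identical to \algo{quantum linear bandits} except that line~6--8 is replaced with running $\mbox{QMC}_2(\mathcal{O}_{x_s},\epsilon_s,\delta/m)$ for its prescribed number of queries, namely $\tilde{O}\bigl(\tfrac{\sigma}{\epsilon_s}\log^{3/2}(\tfrac{8\sigma}{\epsilon_s})\log\log(\tfrac{8\sigma}{\epsilon_s})\log\tfrac{m}{\delta}\bigr)$. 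Since $\mbox{QMC}_2$ still returns $y_s$ with $|y_s-x_s^\trans\theta^*|\leq\epsilon_s$ with probability at least $1-\delta/m$, the per-stage confidence guarantee used throughout the proof of \thm{qlb regret} is preserved. Consequently \lem{trust region} carries over without modification: the same union bound over at most $m$ stages still gives $\theta^*\in\mathcal{C}_s$ for every $s$ with probability at least $1-\delta$.

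The first nontrivial point is the analog of \lem{stages}. The determinant identity $\det(V_{k+1})=2\det(V_k)$ only uses the choice $\epsilon_k=\|x_k\|_{V_{k-1}^{-1}}$ and hence is unchanged, giving $\sum_{k=1}^m 1/\epsilon_k\geq\sqrt{d\lambda(2^{m/d}-1)}/L$ as before. What changes is the count of rounds: the first $m$ stages now contain $\sum_{k=1}^m \tilde{O}(\sigma/\epsilon_k)\log(m/\delta)$ rounds. Because $\epsilon_k$ is at worst polynomially small in $T$ (it is $\|x_k\|_{V_{k-1}^{-1}}\geq 1/\mathrm{poly}(L,T,1/\lambda)$ up to the trivial regime $\epsilon_k\geq 4\sigma$, which we can treat separately since then $\mbox{QMC}_2$ is unneeded), the factor $\log^{3/2}(8\sigma/\epsilon_k)\log\log(8\sigma/\epsilon_k)$ is uniformly $O\bigl(\log^{3/2}(\sigma LT)\log\log(\sigma LT)\bigr)$. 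Therefore setting $m=d\log\bigl(\tfrac{L^2 T^2}{d\lambda}+1\bigr)$ (adjusted by a polylog factor absorbed into the $O(\cdot)$) makes the total round count after $m$ stages exceed $T$, capping the number of stages.

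The final step is the regret accounting. The stage-wise regret bound $(x^*-x_s)^\trans\theta^*\leq 2\epsilon_s(\lambda^{1/2}S+\sqrt{d(s-1)})$ is a direct consequence of the optimism-plus-confidence-region argument in the proof of \thm{qlb regret} and only uses \lem{trust region}, so it carries over. Multiplying by the per-stage round count $\tilde{O}(\sigma/\epsilon_s\cdot\log^{3/2}\log\log\cdot\log(m/\delta))$ produces a per-stage regret of
\begin{align}
O\!\left(\sigma(\lambda^{1/2}S+\sqrt{d(s-1)})\log^{3/2}(\sigma LT)\log\log(\sigma LT)\log\tfrac{m}{\delta}\right).
\end{align}
Summing over $s=1,\dots,m$ contributes the usual $m^{3/2}=O(d^{3/2}\log^{3/2}(\sigma LT))$ factor, yielding the claimed
\begin{align}
R(T)=O\!\left(\sigma d^2\log^3(\sigma LT)\log\log(\sigma LT)\log\tfrac{m}{\delta}\right).
\end{align}
The expected-regret bound then follows from the same conditioning trick as in the end of the proof of \thm{qlb regret}: on the failure event of probability $\delta=m/T$, the worst-case regret is at most $2LST$, which contributes $O(LSm)$ and is absorbed, while substituting $\delta=m/T$ into the high-probability bound turns $\log(m/\delta)$ into $O(\log T)$, giving the stated $O(\sigma d^2\log^4(\sigma LT)\log\log(\sigma LT))$.

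The main obstacle I anticipate is not any single step but rather bookkeeping: ensuring that the regime condition $\epsilon_s<4\sigma$ required by $\mbox{QMC}_2$ is handled (by falling back to the $\mbox{QMC}_1$-style or trivial bound when $\epsilon_s\geq 4\sigma$, whose contribution is negligible) and that the polylogarithmic factors inside the number-of-stages bound are consistently bounded by $\mathrm{polylog}(\sigma LT)$ so that $m$ retains the shape $d\log(\sigma LT)$ up to logarithmic factors that are absorbed in the final $O(\cdot)$.
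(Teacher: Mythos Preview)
Your proposal is correct and follows essentially the same route as the paper. Two minor simplifications the paper makes that you should adopt: first, since each stage in $\mbox{QLinUCB}_2$ is at least as long as the corresponding stage in $\mbox{QLinUCB}_1$ (the $\mbox{QMC}_2$ query count only adds multiplicative factors), \lem{stages} holds verbatim with the \emph{same} $m=d\log(L^2T^2/(d\lambda)+1)$, so no polylog adjustment of $m$ is needed; second, rather than a fallback for $\epsilon_s\geq 4\sigma$, the paper enforces $\epsilon_s<4\sigma$ for all $s$ by choosing the regularizer $\lambda$ large enough (so that $\epsilon_s\leq L/\sqrt{\lambda}<4\sigma$), and bounds the polylog factor via the clean observation $1/\epsilon_s\leq T$.
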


The proof details of the \thm{bv qlb regret} are deferred to \append{sec-4-proof}.


\section{Numerical Experiments}\label{sect:numerics}
We conduct experiments to demonstrate the performance of our two quantum variants of bandit algorithms. For simplicity, we use the Bernoulli rewards in both bandit settings. When considering the Bernoulli noise, we can use the Quantum Amplitude Estimation algorithm in \cite{brassard2002amplitude} as our mean estimator. In \sect{no noise}, we perform the simulations without the quantum noise. In this case, we can run algorithms for a huge amount of rounds to show the advantage of $\mbox{QUCB}$ and $\mbox{QLinUCB}$ on regret. In \sect{noise}, we consider the presence of quantum noise and study the influence of quantum noise to regret. Specifically, we consider a widely used quantum noise model called depolarizing noise. For all experiments, we repeat for $100$ times and calculate the average regret and standard deviation. Our experiments are executed on a computer equipped with Xeon E5-2620 CPU and 64GB memory. Additional details of numerical experiments are given in \append{more-numerics}.

\subsection{Experiments without quantum noise}
\label{sect:no noise}
\paragraph{QMAB setting.} For the QMAB setting, we run $\mbox{UCB}$\footnote{We use the version described in \cite{lattimore2020bandit}, chapter 7, Theorem 7.1.} and $\mbox{QUCB}$ on a 2-arm bandit for $T=10^6$ rounds. Classical $\mbox{UCB}$ algorithm has an instance-dependent $O(\log T)$ bound. Let $SA$ be the set of all sub-optimal arm, then its cumulative regret $R(T)$ satisfies
\begin{align}
    \E [R(T)] \leq \sum_{i\in SA} \frac{1}{\Delta_i} \log T
\end{align}
by~\cite{lattimore2020bandit}, where $\Delta_i$ is the reward gap of arm $i$. That is to say, if the reward gap is independent with time horizon $T$, then $\mbox{UCB}$ also has an $O(\log T)$ regret. Thus, to compare $\mbox{UCB}$ and $\mbox{QUCB}$, we set the reward gap of our experimental instance relatively small. Overall, we set the mean reward of the optimal arm to be $0.5$, then we try different reward gap of the sub-optimal arm, including $0.01, 0.005$, and $0.002$. The results are shown in \figg{MAB_experiment}. From the result, we can see that $\mbox{QUCB}$ has much lower expected regret and variance than $\mbox{UCB}$ when the reward gap is small. In (c), since the reward gap is small enough, $\mbox{UCB}$ cannot distinguish these two arms within $10^6$ rounds thus it suffers almost linear regret, but our $\mbox{QUCB}$ successfully distinguishes them. Furthermore, we find that even if we set the parameter $\delta$ much greater than $1/T$ which are the parameter used in \cor{qucb ex}, the regret still maintain low variance, which means the error probability is not as high as our theoretical bound.

  \begin{figure}[ht]
      \centering
      \subfigure[Reward gap $=0.01$]{
          \includegraphics[width=0.31\textwidth]{./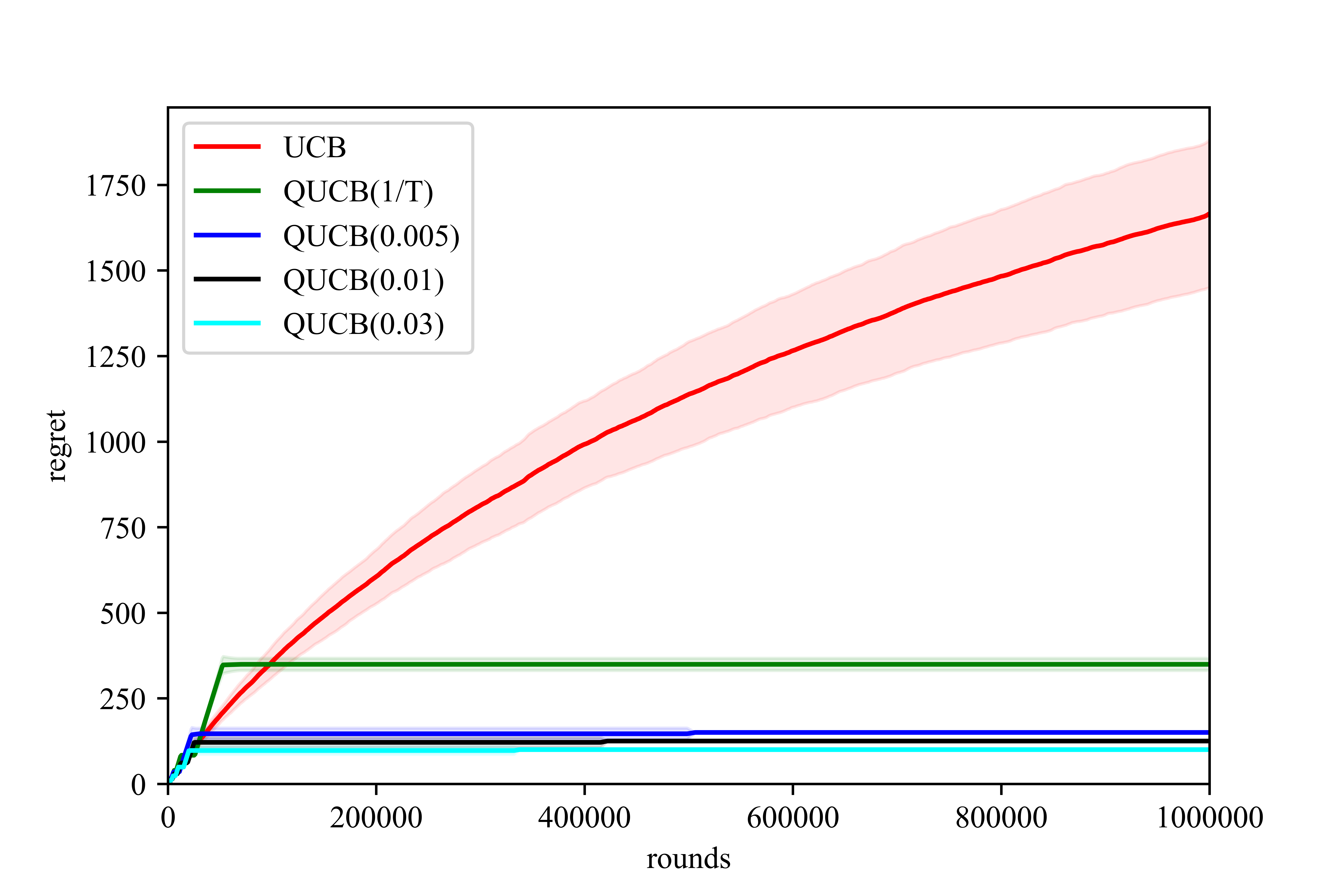}
      }
      \subfigure[Reward gap $=0.005$]{
          \includegraphics[width=0.31\textwidth]{./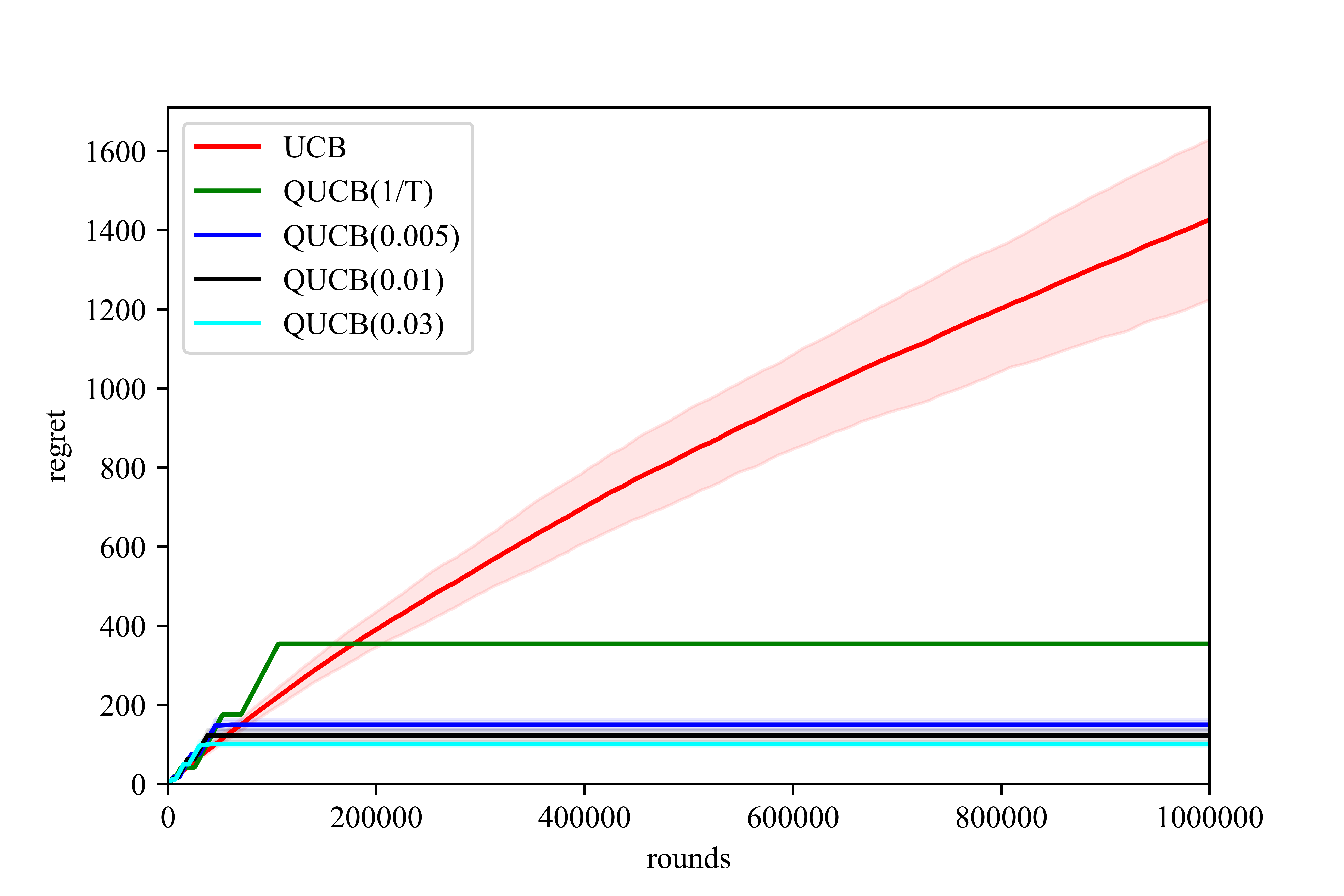}
      }
      \subfigure[Reward gap $=0.002$]{
          \includegraphics[width=0.31\textwidth]{./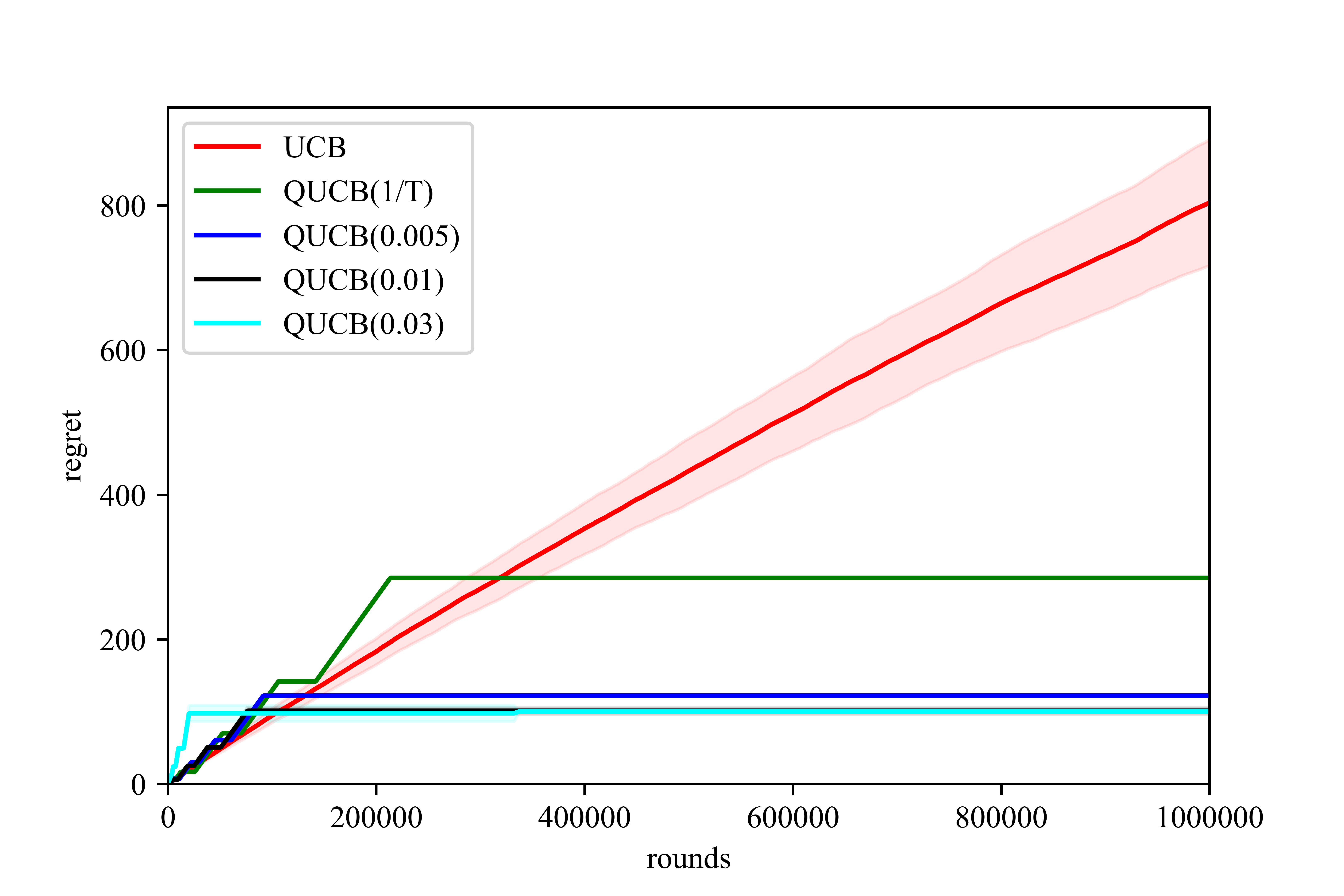}
      }
      \caption{Comparison between $\mbox{UCB}$ and $\mbox{QUCB}$, $\mbox{QUCB}(0.01)$ means $\delta=0.01$.}
      \label{fig:MAB_experiment}
  \end{figure}

\paragraph{QSLB setting.} For the QSLB setting, we study an instance in $\mathbb{R}^{2}$. We take time horizon $T=10^6$. We use the finite action set, and spread $50$ actions equally spaced on the positive part of the unit circle. We set parameter $\theta^*=(\cos (0.35\pi),\sin (0.35\pi))$. We compare our algorithm with the well-known $\mbox{LinUCB}$\footnote{For LinUCB, we use the version decribed in \cite{lattimore2020bandit}, chapter 19, Theorem 19.2.}. The simulation result is shown in \figg{Lin_experiment}, $\lambda$ is set to $1$ throughout the numerical experiments. It can be observed that $\mbox{QLinUCB}$ has lower regret than $\mbox{LinUCB}$, and they both have small variance. 

  \begin{figure}[ht]
    \centering
      \centering
      \subfigure[Comparison between $\mbox{LinUCB}$ and $\mbox{QLinUCB}$]{
          \includegraphics[width=0.6\textwidth]{./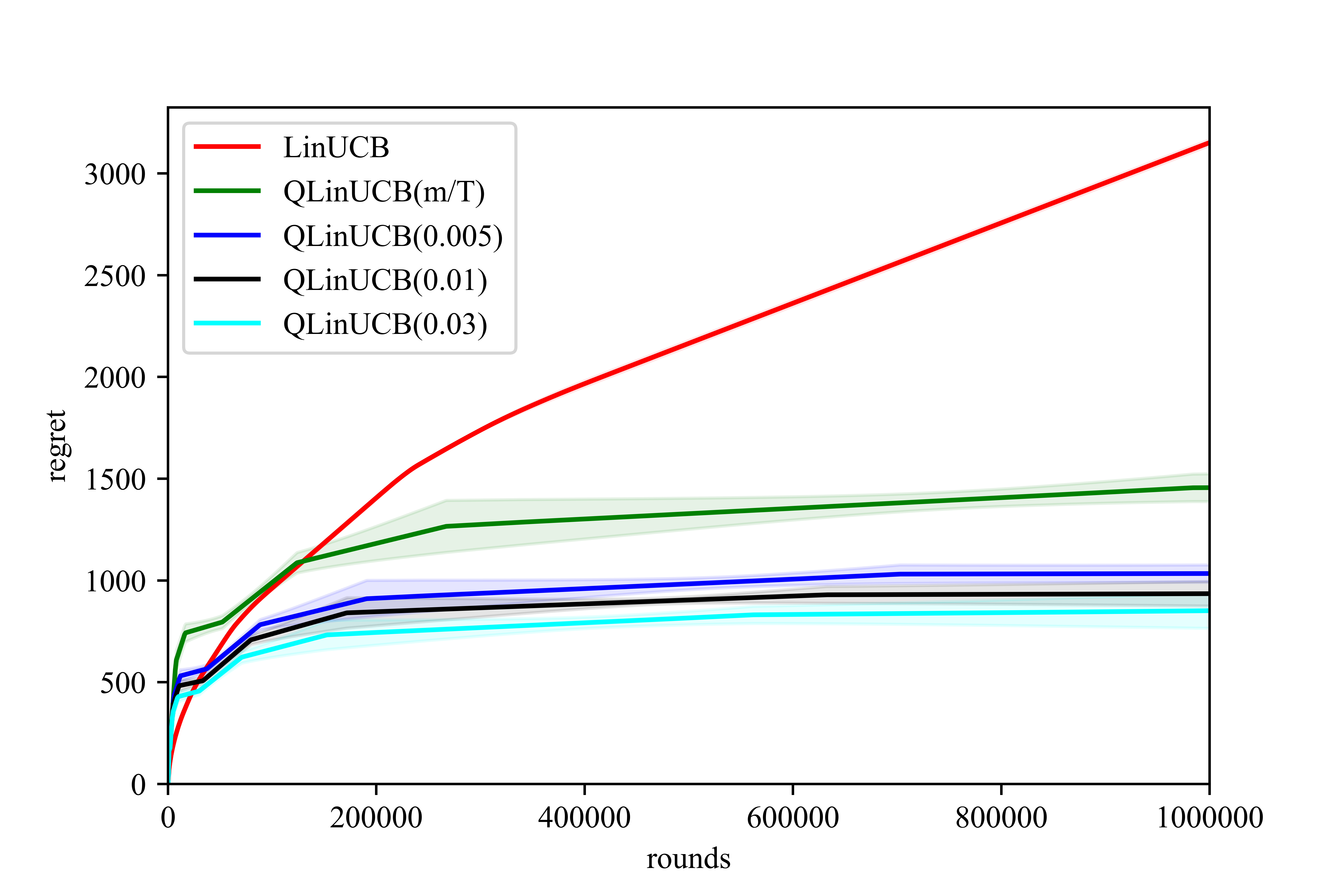}
          }
      \caption{Simulation for the QSLB setting with different $\delta$ (chosen in the legend).}
      \label{fig:Lin_experiment}
  \end{figure}
  \begin{figure}[ht]
      \centering
      \subfigure[QMAB]{
          \includegraphics[width=0.4\textwidth]{./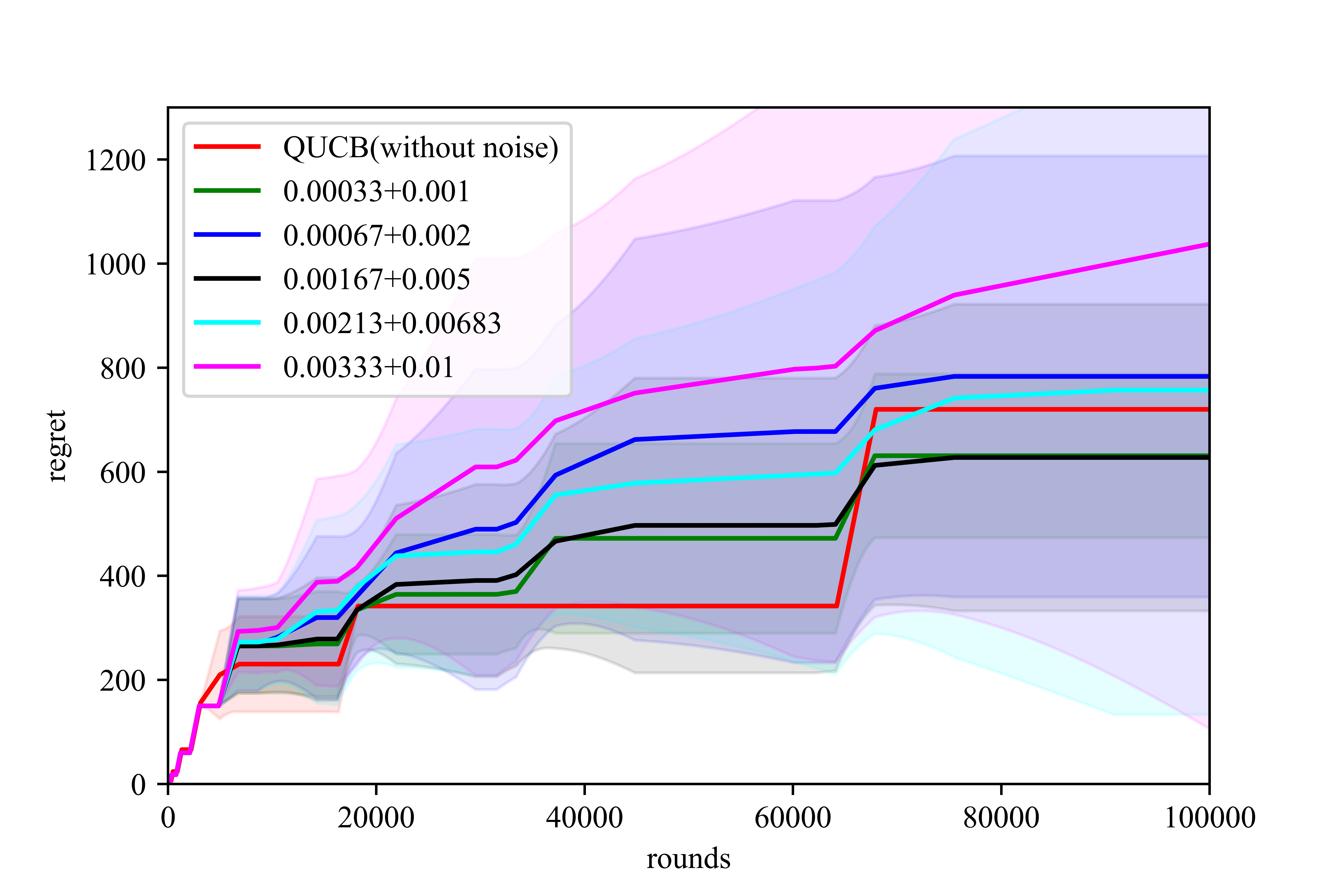}
      }
      \subfigure[QSLB]{
          \includegraphics[width=0.4\textwidth]{./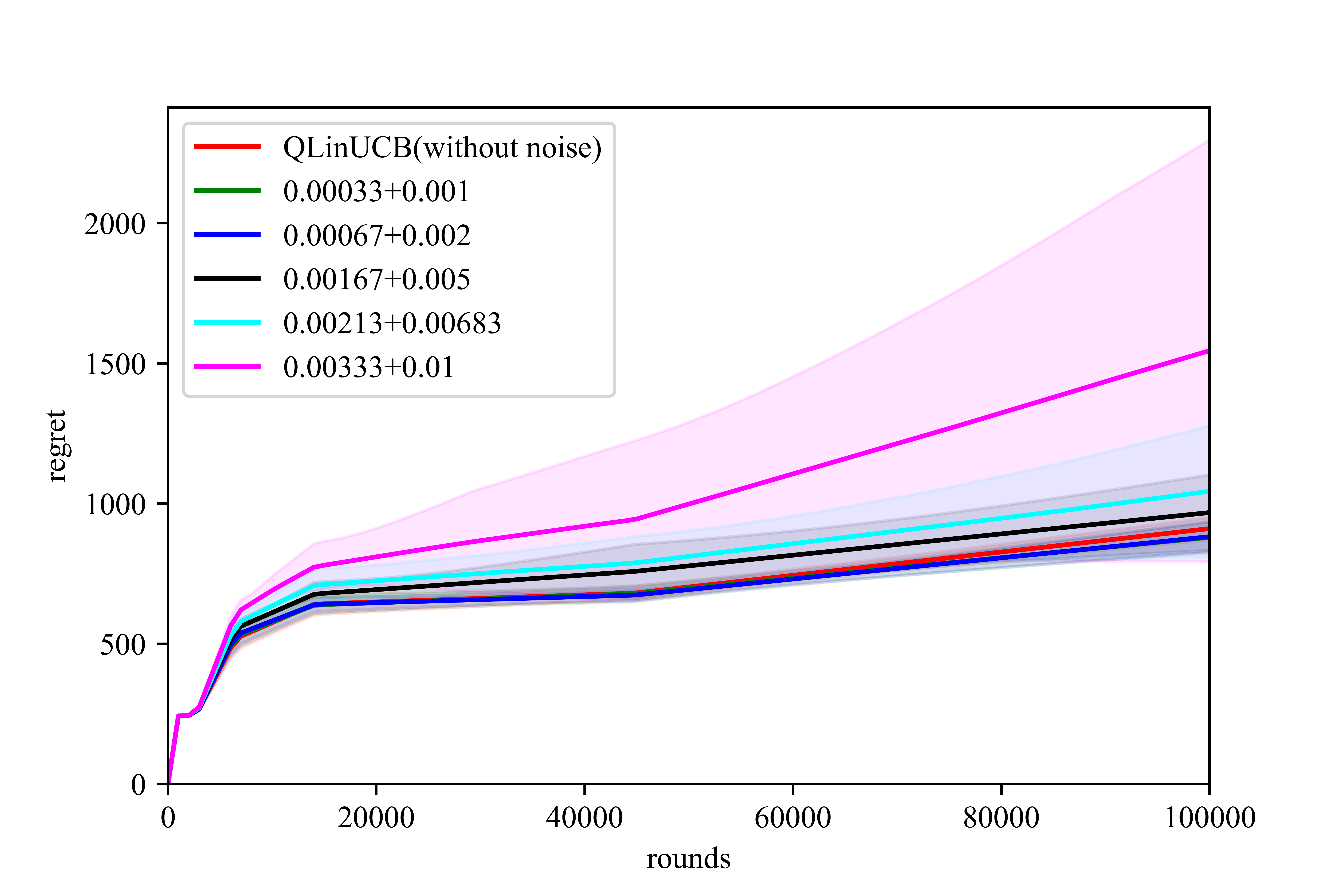}
      }
      \caption{The effect of the quantum depolarizing noise. The number $0.00333+0.01$ in the legend means we set \mbox{err} of a single-qubit channel to be $0.00333$ and $\mbox{err}$ of a two-qubit channel to be $0.01$. We set $\delta=1/T$ and $m/T$ for these two experiments, respectively.}
      \label{fig:noise}
  \end{figure}

\subsection{Experiments with depolarizing noise}
\label{sect:noise}
To study the effects of quantum noise on our two algorithms, we conduct simulations with depolarizing noise channel using the Python open-source quantum computing toolkit ``Qiskit"~\cite{Qiskit}.

\paragraph{Noise model.} We choose $\{\mbox{U1},\mbox{U2},\mbox{U3},\mbox{CNOT}\}$ as our basis gates set, and we consider the depolarizing noise model. This model is considered in quantum computers with many qubits and gates, including Sycamore~\cite{arute2019quantum}.
The depolarizing noise channel $E$ is defined as an operator acting on the density operator:
$E(\rho) = (1-\mbox{err})\rho+\mbox{err}\cdot\tr(\rho)\frac{I_m}{2^m}$, 
where $\mbox{err}$ is the error rate and $m$ is the number of qubits of this channel. 
For error rate, we try $\{0.001,0.002,0.005,0.01\}$ for two qubits channel, and set the single qubit error rate as the $\frac{1}{3}$ of the two qubit error rate. Besides, we also try single qubit error of $0.00213$ and two qubit error of $0.00683$, which are the error rate of Sycamore. In their paper, the error rate is $0.0016$ and $0.0063$, but these are pauli error rates. By the relation of pauli error and depolarizing error, we can convert them into our depolarizing error rate of $0.00213$ and $0.00683$.

\paragraph{Results.} For the QMAB setting, we choose the $2$-arm instance with rewards $\{0.4,0.5\}$. For the QSLB setting, we use the same instance as in \sect{no noise}. Since simulating quantum noise channel on a classical computer is time consuming, we set the time horizon of the bandit instances to be $10^5$.
Note that since the time horizon in our instance is relatively small, the quantum variants of UCB cannot outperform the classical UCB in this case even though the quantum variants have advantages in asymptotic order. This is because $\mbox{QMC}$ introduces large constant factors. In this section, we only focus on the performance of the quantum algorithms with different error rates. We plot the results in \figg{noise}.
From the figures, the expected regret of $\mbox{QUCB}$ is not affected much by depolarizing noise when the two-qubit depolarizing error rate is at most $0.00683$. Even if the error rates are $0.00333$ and $0.01$, the regret of $\mbox{QUCB}$ is still much better than pulling arms randomly (which incurs an expected regret of $5000$ at the end). However, the presence of depolarizing noise does increase the variance of the regret. As for $\mbox{QLinUCB}$, when the two-qubit channel error rate is no more than $0.00683$, the regret keeps almost unchanged and small variance all the time. When the two-qubits channel error rate is $0.01$, $\mbox{QLinUCB}$ suffers from higher expected regret and variance. Overall, at the level of noise that can be achieved by today's quantum computers, the regrets of our algorithms are only relatively little affected when time horizon is in the order of $10^5$.


\section{Conclusion}

In this paper, we proved that quantum versions of multi-arm bandits (\sect{mab}) and stochastic linear bandits (\sect{slb}) both enjoy $O(\poly(\log T))$ regrets. To the best of our knowledge, this is the first provable quantum speedup for regrets of bandit problems and in general exploitation in reinforcement learning. Compared to previous literature on quantum exploration algorithms for MAB and reinforcement learning, our quantum input model is simpler and only assumes quantum oracles for each individual arm. We also corroborate our results with numerical experiments (\sect{numerics}).

Our work raises several natural questions for future investigation. First, it is natural to seek for quantum speedups for regrets of other bandit problems. Second, additional research is needed to achieve speedup in $n$ and $d$ for the regrets of MAB and SLB, respectively; this may require a reasonable new model. Third, it is worth understanding whether algorithms with $T$-independent regret exist, or we can prove a matching quantum lower bound.


\section*{Acknowledgements}
We thank Ronald de Wolf for helpful discussions. TL was supported by a startup fund from Peking University, and the Advanced Institute of Information Technology, Peking University.


\bibliographystyle{plain}
\bibliography{Qbandits}


\newpage
\appendix

\section{QMAB with bounded variance assumption}\label{append:sec-3-proof}
$\mbox{QUCB}_2$ uses $\mbox{QMC}_2$ as the quantum mean estimator and modifies the update of $N_i$ the stage length in Line $3$ and $10$ of \algo{qucb2}, correspondingly. It also changes the initial accuracy to $2\sigma$ in Line 2 to fit the $\epsilon \leq 4\sigma$ requirement of $\mbox{QMC}_2$.
\begin{algorithm}[htbp]
	\caption{$\mbox{QUCB}_2(\delta)$}
	\label{algo:qucb2}
	\hspace*{\algorithmicindent} \textbf{Parameters:} fail probability $\delta$
	\begin{algorithmic}[1]
	   \FOR{$i=1\to n$}
	   	    \STATE $r_i\gets 2\sigma$
	        \STATE $N_i\gets \frac{C_2\sigma}{r_i}\log_2^{3/2}\left(\frac{8\sigma}{r_i}\right)\log_2 \log_2\left(\frac{8\sigma}{r_i}\right)\log \frac{1}{\delta}$
	        \STATE play arm $i$ for consecutive $N_i$ rounds
	        \STATE run $\mbox{QMC}_2(\mathcal{O}_i, r_i, \delta)$ to get an estimation $\hat{\mu}(i)$ for $\mu(i)$
	   \ENDFOR
		\FOR{each stage $s=1,2,\ldots$ (terminate when we have used $T$ queries to all $\mathcal{O}_{i}$)}
		\STATE Let $i_s\leftarrow\argmax_i \hat{\mu}(i)+r_i$ (if $\argmax$ has multiple choices, pick an arbitrary one) 
		\STATE update $r_{i_s} \gets r_{i_s}/2$
		\STATE update $N_{i_s} \gets \frac{C_2\sigma}{r_{i_s}}\log_2^{3/2}\left(\frac{8\sigma}{r_{i_s}}\right)\log_2\log_2 \left(\frac{8\sigma}{r_{i_s}}\right)\log \frac{1}{\delta}$
		\STATE Play $i_s$ for the next $N_{i_s}$ rounds, update $\hat{\mu}(i_s)$ by runing $\mbox{QMC}_2(\mathcal{O}_{i_s},r_{i_s}, \delta)$
		\ENDFOR
	\end{algorithmic}
\end{algorithm}
\begin{lemma}\label{lem:integrate by part}
For $K>0$ an integer, the following holds, 
\begin{align}
    \sum_{k=1}^K 2^k\log_2^{3/2}\left(2^{k+2}\right)\log_2\log_2 \left(2^{k+2}\right)\leq O\left(2^K K^{3/2}\log(K)\right).
\end{align}
\end{lemma}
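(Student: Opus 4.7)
The plan is to observe that the sum is dominated by its last term, so the key step is to pull out the slow-growing polylog factor at its maximum value $k = K$ and then bound the remaining geometric sum.

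First I would simplify the summand using $\log_2(2^{k+2}) = k+2$. This turns the inequality into
\[
\sum_{k=1}^{K} 2^{k}\,(k+2)^{3/2}\log_{2}(k+2) \;\leq\; O\!\left(2^{K} K^{3/2}\log K\right),
\]
which removes the nested-log distraction and reveals a ``geometric times polylog'' structure.

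Next, since $f(k) := (k+2)^{3/2}\log_{2}(k+2)$ is nondecreasing in $k$ on $[1,K]$, I would upper-bound $f(k)$ uniformly by $f(K) = (K+2)^{3/2}\log_{2}(K+2)$ and pull this factor out of the sum. What remains is the geometric sum $\sum_{k=1}^{K} 2^{k} = 2^{K+1} - 2 \leq 2^{K+1}$. Multiplying the two bounds together yields
\[
\sum_{k=1}^{K} 2^{k}\,(k+2)^{3/2}\log_{2}(k+2) \;\leq\; 2(K+2)^{3/2}\log_{2}(K+2)\cdot 2^{K},
\]
and absorbing constants and the $+2$ shift into the big-$O$ gives the claimed $O(2^{K} K^{3/2}\log K)$.

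There is essentially no technical obstacle here; the whole argument rests on the fact that the geometric factor $2^{k}$ concentrates the mass of the sum at $k = K$, so bounding the polylog factor by its terminal value loses only a constant. The only thing to be slightly careful about is the edge case $K = 1$ (where $\log K = 0$), but the lemma is stated for integer $K > 0$ under a big-$O$ bound, so interpreting the bound as $O(2^{K}(K+2)^{3/2}\log_{2}(K+2))$ handles this automatically. Despite its misleading title, the technique used is just term-wise monotone bounding followed by summing a geometric series — no integration by parts is actually required.
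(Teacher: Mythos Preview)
Your proof is correct, and it is a genuinely different (and simpler) argument than the paper's. The paper actually does what the lemma name suggests: it bounds the sum by the integral $\int_{1}^{K+1} 2^{x}(x+2)^{3/2}\log(x+2)\,dx$, then integrates by parts against $d(2^{x})$ to extract the leading term $2^{K+1}(K+3)^{3/2}\log(K+3)$ and discards the remaining (positive) integral. Your approach bypasses the calculus entirely by observing that the polylog factor $(k+2)^{3/2}\log_2(k+2)$ is monotone, bounding it by its terminal value, and summing the residual geometric series. Both arrive at the same $O(2^{K}K^{3/2}\log K)$ bound; your route is shorter and avoids the mild imprecision in the paper's handling of constants inside the integral. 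Your remark about the $K=1$ edge case is also well taken and consistent with the asymptotic reading of the big-$O$ in the lemma statement.
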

\begin{proof}
We use the standard integrating by part technique:
\begin{align}
    &\sum_{k=1}^K 2^k\log_2^{3/2}\left(2^{k+2}\right)\log_2\log_2 \left(2^{k+2}\right)\\\leq & \int_{1}^{K+1}2^{x}\log_2^{3/2}\left(2^{x+2}\right)\log_2\log_2 \left(2^{x+2}\right)dx\\=&O\left(\int_{1}^{K+1}2^{x}(x+2)^{3/2}\log (x+2)dx\right)\\
    =&O\left(\int_1^{K+1}(x+2)^{3/2}\log (x+2)d2^x\right)\\=&O\left(2^{K+1}(K+3)^{3/2}\log(K+3)-\int_{1}^{K+1}2^x\frac{3}{2}(x+2)^{1/2}\left(1+\log (x+2)\right)dx\right)\\=&O\left(2^K K^{3/2}\log(K)\right).
\end{align}
The last equality holds because $\int_{1}^{K+1}2^x\frac{3}{2}(x+2)^{1/2}\left(1+\log (x+2)\right)dx> 0$.
\end{proof}

\begin{proof}[Proof of \thm{bv qucb regret}] The proof of \thm{bv qucb regret} is very similar to the proof of \thm{qucb regret}, and we use the same notation. The number of rounds of each stage in $\mathcal{S}_i$ are $C_2\log^{3/2}_2(8)\log_2\log_2 (8)\log(\frac{1}{\delta})$, $2C_2\log_2^{3/2}(16)\log_2\log_2 (16)\log(\frac{1}{\delta}),\ldots,2^{K_i-1}\log_2^{3/2}(4\cdot 2^{K_i})\log_2\log_2 (4\cdot 2^{K_i})\log(\frac{1}{\delta})$. 
It shows
\begin{align}
    n\frac{C_2}{2}\cdot 2^{3/2}\log(\frac{1}{\delta})+\sum_{i=1}^n\sum_{k=1}^{K_i}2^{k-1}\log_2^{3/2}\left(2^{k+2}\right)\log_2\log_2 \left(2^{k+2}\right)\log\left(\frac{1}{\delta}\right)\leq T.
\end{align}
Throwing away the $\log_2^{3/2}\left(2^{k+2}\right)\log_2\log_2 \left(2^{k+2}\right)$ term, we have
\begin{align}
    T&\geq n\frac{C_2}{2}\cdot 2^{3/2}\log(\frac{1}{\delta})+\Omega\left(\sum_{i=1}^n\sum_{k=1}^{K_i}2^{k-1}\log\left(\frac{1}{\delta}\right)\right)\\&=n\frac{C_2}{2}\cdot 2^{3/2}\log(\frac{1}{\delta})+\Omega\left(\sum_{i=1}^n (2^{K_i}-1)\log\left(\frac{1}{\delta}\right)\right),
\end{align}
which leads to an upper bound of number of stages
\begin{align}
    \sum_{i=1}^{n}K_i \leq O\left(n\log(T)\right).
\end{align}
By the same argument, we bound $R(T;i)$
\begin{align}
    R(T;i)&\leq O(2^{1-K_i}\sigma\sum_{k=0}^{K_i}2^k \log_2^{3/2}(2^{k+2})\log\log (2^{k+2}))\log(\frac{1}{\delta})\\
    &\leq O\left(\sigma K_i^{3/2}\log(K_i)\log\left(\frac{1}{\delta}\right)\right)\leq O\left(\sigma\log^{3/2} (T)\log\log (T)\log\left(\frac{1}{\delta}\right)\right),
\end{align}
where the second inequality is due to \lem{integrate by part}, and the last inequality is because for any $i$, $K_i\leq \log_2 (T)$. Therefore,
\begin{align}
    R(T)&=\sum_{i=1}^n R(T;i)\leq O\left(n\sigma\log^{3/2} (T)\log\log (T)\log\left(\frac{1}{\delta}\right)\right).
\end{align}
The upper bound of $\E [R(T)]$ is the same as \cor{qucb ex}.
\end{proof}

\section{QSLB with bounded variance assumption}\label{append:sec-4-proof}
For bounded variance rewards, the algorithm $\mbox{QLinUCB}_2$ uses $\mbox{QMC}_2$ to replace $\mbox{QMC}_1$, and it changes the length of stages in line $6$ of \algo{bv quantum linear bandits} to fit the query complexity of $\mbox{QMC}_2$. To satisfy the requirement of $\mbox{QMC}_2$ on the accuracy, we set the regularization parameter $\lambda > \frac{1}{4\sigma L}$. 

\begin{algorithm}[ht]
\caption{$\mbox{QLinUCB}_2(\delta)$}
\label{algo:bv quantum linear bandits}
	\hspace*{\algorithmicindent} \textbf{Parameters:} fail probability $\delta$
\begin{algorithmic}[1]
    \STATE Initialize $V_0\gets \lambda I_d$, $\hat{\theta}_0\gets \mathbf{0}\in \mathbb{R}^d$ and $m\gets d\log(\frac{L^2T^2}{d\lambda}+1)$.
	\FOR{each stage $s=1,2,\cdots$}
	    \STATE $\mathcal{C}_{s-1}\gets\{\theta\in\mathbb{R}^d: \|\theta-\hat{\theta}_{s-1}\|_{V_{s-1}}\leq \lambda^{1/2}S+\sqrt{d(s-1)}\}$.
	    \STATE $(x_s,\tilde{\theta}_s)\gets\argmax_{(x,\theta)\in A\times\mathcal{C}_{s-1}} x^{\trans}\theta$.
	    \STATE $\epsilon_{s} \gets \|x_s\|_{V_{s-1}^{-1}}$.
	    \FOR{the next $\frac{C_2\sigma}{\epsilon_s}\log_2^{3/2}(\frac{\sigma}{\epsilon_s})\log_2 \log_2 (\frac{\sigma}{\epsilon_s})\log\left(\frac{m}{\delta}\right)$ rounds}
	    \STATE Play action $x_s$ and run $\mbox{QMC}_2(\mathcal{O}_{x_s},\epsilon_s,\delta/m)$, getting $y_{s}$ as an estimation of $x_s^{\trans}\theta^*$.
	    \ENDFOR
	    \STATE $V_s\gets V_{s-1}+\frac{1}{\epsilon_{s}^2}x_sx_s^{\trans}$.
	    \STATE $X_s\gets (x_1,x_2,\ldots, x_s)^{\trans}\in\mathbb{R}^{s\times d}$, $Y_s\gets(y_1,y_2,\ldots,y_s)^{\trans}\in \mathbb{R}^{s}$ and  $W_s \gets \diag\left(\frac{1}{\epsilon_1^2},\frac{1}{\epsilon_2^2},\ldots, \frac{1}{\epsilon_s^2}\right)$
	    \STATE $\hat{\theta}_{s}\gets V_s^{-1}X_{s}^{\trans}W_{s}Y_{s}$.
	\ENDFOR
\end{algorithmic}
\end{algorithm}
\begin{proof}[Proof of \thm{bv qlb regret}]
    It is easy to check the modification does not influence the results of \lem{stages} and \lem{trust region}, so we have the \algo{bv quantum linear bandits} version of both lemmas. Then follow the proof of \thm{qlb regret} until we get
    \begin{align}
        (x^{*}-x_s)^{\trans}\theta^* \leq 2\epsilon_s\cdot (\lambda^{1/2}S+\sqrt{d(s-1)}).
    \end{align}
    due to our modification on the length of stages, the cumulative regret in stage $s$ is bounded by 
    \begin{align}
        2C_2\sigma(\lambda^{1/2}S+\sqrt{d(s-1)})\log_2^{3/2}\left(\frac{8\sigma}{\epsilon_s}\right)\log_2\log_2\left(\frac{8\sigma}{\epsilon_s}\right)\log \left(\frac{m}{\delta}\right)
    \end{align}
    Since $\frac{1}{\epsilon_s}\leq T$, the cumulative regret in stage $s$ is bounded by 
    \begin{align}
        O\left(\sigma\left(\lambda^{1/2}S+\sqrt{d(s-1)}\right)\log_2^{3/2}\left(T\sigma\right)\log_2\log_2\left(T\sigma\right)\log \left(\frac{m}{\delta}\right)\right)
    \end{align}
    Then,
    \begin{align}
    R(T) & \leq \sum_{s=1}^{m} O\left(\sigma\sqrt{d(s-1)}\log_2^{3/2}\left(T\sigma\right)\log_2\log_2\left(T\sigma\right)\log \left(\frac{m}{\delta}\right)\right) \\
        &= O\left(\sigma d^2\log^3 (\sigma LT)\log\log (\sigma LT)\log \left(\frac{m}{\delta}\right)\right).
    \end{align}
    The upper bound of $\E [R(T)]$ follows accordingly.
\end{proof}

\section{More Details about Numerical Experiments}\label{append:more-numerics}
For Bernoulli rewards, we use Quantum Amplitude Estimation as our quantum mean estimator.
\begin{lemma}[Quantum Amplitude Estimation~\cite{brassard2002amplitude}]
	\label{lem:AE_new}
	Assume that $y$ is a Bernoulli random variable and $\mathcal{O}$ encodes $y$ and $d$ an positive integer. Then $\mbox{QAE}(\mathcal{O},d)$ returns an estimate $\hat{y}$ of $\mathbb{E}[y]$ such that \[|\hat{y}-\E[y]|\leq \frac{\pi}{2^d}+\frac{\pi^2}{2^{2d}}\] with probability at least $\frac{8}{\pi^2}$. It use exactly $2^d$ quantum queries to $\mathcal{O}$ and $\mathcal{O}^{\dagger}$.
\end{lemma}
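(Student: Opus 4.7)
The plan is to mirror the proof of \thm{qlb regret} but with the stage length dictated by the bounded-variance $\mbox{QMC}_2$ query complexity from \lem{QME}. First I would observe that the two structural lemmas still apply unchanged to $\mbox{QLinUCB}_2$: \lem{trust region} only uses that $|x_s^{\trans}\theta^*-y_s|\le\epsilon_s$ holds with probability at least $1-\delta/m$ for each stage (which $\mbox{QMC}_2$ guarantees under the bounded variance assumption, provided $\epsilon_s\le 4\sigma$, a condition enforced by the choice $\lambda>\frac{1}{4\sigma L}$ because $\epsilon_s=\|x_s\|_{V_{s-1}^{-1}}\le L/\sqrt{\lambda}$), while \lem{stages} only uses the doubling identity $\det(V_{k+1})=2\det(V_k)$, which is preserved since neither $\epsilon_s$ nor the update to $V_s$ changes. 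So with probability at least $1-\delta$ both $\theta^*\in\mathcal{C}_{s-1}$ and $\tilde\theta_s\in\mathcal{C}_{s-1}$ for all stages, and the total number of stages is still $m=d\log(L^2T^2/(d\lambda)+1)$.

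Next I would reproduce the optimism argument exactly as in \thm{qlb regret} to get the per-round regret bound
\begin{align}
(x^{*}-x_s)^{\trans}\theta^*\le 2\epsilon_s\left(\lambda^{1/2}S+\sqrt{d(s-1)}\right).
\end{align}
Multiplying by the new stage length $\frac{C_2\sigma}{\epsilon_s}\log_2^{3/2}(8\sigma/\epsilon_s)\log_2\log_2(8\sigma/\epsilon_s)\log(m/\delta)$ (which cancels the $\epsilon_s$ in the per-round regret), the cumulative regret in stage $s$ is
\begin{align}
O\!\left(\sigma\bigl(\lambda^{1/2}S+\sqrt{d(s-1)}\bigr)\log_2^{3/2}(8\sigma/\epsilon_s)\log_2\log_2(8\sigma/\epsilon_s)\log(m/\delta)\right).
\end{align}
Since $1/\epsilon_s\le L\sqrt{\det(V_{s-1})/\lambda^d}/\sqrt{\lambda}$ is bounded polynomially in $T$ (and in particular $1/\epsilon_s\le T$ as used in the body), the logarithmic factors $\log(8\sigma/\epsilon_s)$ collapse to $O(\log(\sigma LT))$.

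Then I would sum over $s=1,\ldots,m$, where the $\sqrt{d(s-1)}$ dominates $\lambda^{1/2}S$ and gives an $O(m^{3/2})$ contribution:
\begin{align}
R(T)\le O\!\left(\sigma\sqrt{d}\cdot m^{3/2}\log^{3/2}(\sigma LT)\log\log(\sigma LT)\log(m/\delta)\right).
\end{align}
Plugging in $m=O(d\log T)$ turns $\sqrt{d}\cdot m^{3/2}$ into $d^2\log^{3/2}T$ and combines with the other $\log^{3/2}$ factor to yield $\log^{3}(\sigma LT)$, proving the high-probability bound. For the expected regret, I would split on the good event $\mathcal{E}$ exactly as in \thm{qlb regret}: use $|(x_1-x_2)^{\trans}\theta^*|\le 2LS$ to bound $\E[R(T)\mid\bar{\mathcal{E}}]\Pr[\bar{\mathcal{E}}]\le 2LST\cdot m/T=O(dLS\log T)$, which is absorbed into the stated bound when $\delta=m/T$.

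The main obstacle is keeping the logarithmic bookkeeping clean: the $\mbox{QMC}_2$ complexity introduces two separate sources of $\log$ factors (the $\log^{3/2}(\sigma/\epsilon_s)\log\log(\sigma/\epsilon_s)$ query overhead and the $\log(m/\delta)$ confidence factor), and one must argue that the minimum value of $\epsilon_s$ is only polynomially small in $T$ so these overheads aggregate into the advertised $\log^3(\sigma LT)\log\log(\sigma LT)$ rather than something worse. Everything else follows by the same UCB + determinant-doubling template used for the bounded-value case.
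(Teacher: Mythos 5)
Your proposal does not prove the statement it is paired with. The statement is \lem{AE_new}, the Quantum Amplitude Estimation guarantee: that $\mbox{QAE}(\mathcal{O},d)$ returns $\hat{y}$ with $|\hat{y}-\E[y]|\leq \pi/2^d+\pi^2/2^{2d}$ with probability at least $8/\pi^2$ using exactly $2^d$ queries. This is a property of the amplitude estimation circuit itself (it is Theorem 12 of Brassard--H{\o}yer--Mosca--Tapp, which the paper imports by citation and does not reprove); a proof would have to analyze phase estimation applied to the Grover iterate, the resulting distribution over measurement outcomes, and the error propagation through $p=\sin^2(\theta)$, none of which appears in your argument. What you have written instead is a regret analysis for $\mbox{QLinUCB}_2$ under the bounded variance assumption --- that is, a proof of \thm{bv qlb regret}, a different statement in the paper.

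As a proof of \thm{bv qlb regret} your outline is essentially the paper's own argument in \append{sec-4-proof} (reuse of \lem{stages} and \lem{trust region}, the optimism step giving $(x^*-x_s)^{\trans}\theta^*\leq 2\epsilon_s(\lambda^{1/2}S+\sqrt{d(s-1)})$, cancellation of $\epsilon_s$ against the stage length, the bound $1/\epsilon_s\leq T$ to collapse the logarithms, and the summation over $m=O(d\log T)$ stages), with the additional and correct observation that $\lambda>\frac{1}{4\sigma L}$ is what keeps $\epsilon_s\leq L/\sqrt{\lambda}<4\sigma$ so that $\mbox{QMC}_2$ is applicable. But it cannot stand as a proof of \lem{AE_new}: the two results are logically unrelated, and the quantities in the lemma ($2^d$ queries, the $8/\pi^2$ success probability, the $\pi/2^d+\pi^2/2^{2d}$ error) are never derived or even mentioned. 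You need to either redirect this argument to the theorem it actually establishes, or produce a genuine analysis of the amplitude estimation algorithm.
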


Through \lem{AE_new}, for any given accuracy $\epsilon$, we can find the smallest positive integer $d_{\epsilon}$ such that $\mbox{QAE}(\mathcal{O},d_{\epsilon})$ return an estimate $\hat{y}$ such that $|\hat{y}-\mathbb{E}[Y]|\leq \epsilon$ with probability at least $\frac{8}{\pi^2}$. To apply $\mbox{QAE}$ on $\mbox{QUCB}$ and $\mbox{QLinUCB}$, this success probability needs to be improved to any given $1-\delta$. This can be done by simply running $\mbox{QAE}(\mathcal{O},d_{\epsilon})$ independently for several times and selecting the median of these estimates as the final estimate. To specify the constant in this process, we give the following lemma.

\begin{lemma}[Powering Lemma~\cite{jerrum1986random}]
Run $\mbox{QAE}(\mathcal{O},d_{\epsilon})$ independently for $5\log (1/\delta)$ times, then the median $\tilde{y}$ of these estimations satisfies $|\tilde{y}-\mathbb{E}[y]|\leq \epsilon$ with probability at least $1-\delta$.
\end{lemma}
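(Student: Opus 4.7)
The plan is to prove the Powering Lemma by the standard median-of-independent-trials amplification, using the per-run guarantee of QAE from \lem{AE_new} that a single call to $\mbox{QAE}(\mathcal{O},d_\epsilon)$ returns $\hat{y}$ with $|\hat{y}-\E[y]|\leq\epsilon$ with probability at least $p=8/\pi^{2}>1/2$. The key point is that the per-run success probability strictly exceeds $1/2$, so an odd number of independent trials combined with a majority-style rule, i.e., taking the median, amplifies confidence exponentially in the number of trials.

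First, I would introduce, for each of the $N=\lceil 5\log(1/\delta)\rceil$ independent invocations of $\mbox{QAE}(\mathcal{O},d_\epsilon)$, the indicator random variable $Z_i=\mathbf{1}[|\hat{y}_i-\E[y]|\leq\epsilon]$. By independence of the runs and the per-run guarantee, $Z_1,\ldots,Z_N$ are independent Bernoullis with $\E[Z_i]\geq p$.

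Second, I would establish a purely deterministic median-failure observation: if $\tilde{y}:=\mathrm{median}(\hat{y}_1,\ldots,\hat{y}_N)$ satisfies $|\tilde{y}-\E[y]|>\epsilon$, then either $\tilde{y}>\E[y]+\epsilon$ or $\tilde{y}<\E[y]-\epsilon$. In the first case at least $\lceil N/2\rceil$ of the $\hat{y}_i$ are $\geq\tilde{y}>\E[y]+\epsilon$ and hence have $Z_i=0$; the second case is analogous. Thus the bad event for the median is contained in the majority-failure event $\{\sum_{i=1}^{N}Z_i\leq\lfloor N/2\rfloor\}$, which reduces the problem to a tail bound on a sum of independent Bernoullis.

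Third, I would bound the majority-failure probability by a Chernoff-type tail. Since $\E[\sum Z_i]\geq pN$ with $p>1/2$, the relative-entropy Chernoff inequality gives $\Pr[\sum Z_i\leq N/2]\leq\exp(-N\cdot D(1/2\|p))$, and a direct calculation shows $D(1/2\|8/\pi^{2})=-\tfrac{1}{2}\ln(4p(1-p))>1/5$; plugging in $N=5\log(1/\delta)$ then yields a bound at most $\delta$. The main obstacle is that the looser Hoeffding inequality $\exp(-2N(p-1/2)^{2})$ is not tight enough: with $p=8/\pi^{2}$ we have $(p-1/2)^{2}\approx 0.097$, so $2\cdot 5\cdot 0.097<1$ in the exponent, which would only prove a $\delta^{<1}$ failure probability. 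One therefore must use the sharper KL-based bound (or equivalently the classical $(4p(1-p))^{N/2}$ median bound) to absorb the prefactor $5$ correctly; this constant-tuning calculation is the only nontrivial quantitative step.
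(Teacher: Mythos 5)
Your proposal is correct and follows essentially the same route as the paper's proof: reduce the median's failure to the event that at least half the independent runs fall outside $[\E[y]-\epsilon,\E[y]+\epsilon]$, then bound that binomial tail with the sharp Chernoff bound, which in your KL form $\exp(-N\,D(1/2\,\|\,8/\pi^2))$ is exactly the paper's $\left(4p(1-p)\right)^{N/2}$ expression. Your remark that plain Hoeffding is too weak for the constant $5$ is a correct and worthwhile clarification of why the relative-entropy form is needed, but it does not change the argument.
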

\begin{proof}
    If $|\tilde{y}-\mathbb{E}[y]|>\epsilon$, then at least half of the estimations do not lie in $\left[\mathbb{E}[y]-\epsilon,\mathbb{E}[y]+\epsilon\right]$. Each estimation do not in $\left[\mathbb{E}[y]-\epsilon,\mathbb{E}[y]+\epsilon\right]$ happens with a probability less than $1-\frac{8}{\pi^2}$. Let $\{X_i\}_{i=1}^{5\log (1/\delta)}$ be i.i.d samples draw from Bernoulli distribution with mean value $\frac{8}{\pi^2}$. Then
    \begin{align}
        \mathbb{P}\left[|\tilde{y}-\mathbb{E}[y]|>\epsilon\right]&\leq\mathbb{P}\left[\frac{1}{5\log (1/\delta)}\sum_{i=1}^{5\log (1/\delta)} X_i\leq \frac{8}{\pi^2}-\left(\frac{8}{\pi^2}-\frac{1}{2}\right)\right]\\&\leq \left(\left(\frac{8/\pi^2}{1/2}\right)\left(\frac{1-8/\pi^2}{1/2}\right)\right)^{\frac{5}{2}\log(1/\delta)}\leq\delta,
    \end{align}
    where the second inequality is due to Chernoff's bound.
\end{proof}

\paragraph{Simulating quantum subroutine.}  We noticed that Theorem 11 of \cite{brassard2002amplitude} give the closed-form description of the measure results distribution of the Quantum Amplitude Estimation circuit. Thus we can directly use this closed-form description to construction a classical distribution and sample from it to simulate the result of Quantum Amplitude Estimation.

\paragraph{Solver for $(x_s,\tilde{\theta}_s)$.} Generally speaking, $\argmax_{(x,\theta)\in A\times\mathcal{C}_{s-1}} x^{\trans}\theta$ is a bilinear optimization problem, which has no efficient solver. But when we consider the finite action set, there is a simple method to calculate $x_s$. That is, if $\mathcal{C}_{s-1}$ is a ellipsoid defined by $\{\theta \ |\ \|\theta-\hat{\theta}_{s-1}\|_{V_{s-1}}\leq r_{s-1}\}$, then
\[x_s =\argmax_{x\in A}\  \langle \hat{\theta}_{s-1},x\rangle + r_{s-1}\|x\|_{V_{s-1}^{-1}}\]
This can be solved by enumerating all possible $x\in A$ if $A$ is a small finite set. Now we consider the choice of $r_{s-1}$, we can let $r_{s-1}=\lambda^{1/2}S+\sqrt{d(s-1)}$ directly from \lem{trust region}. But \lem{trust region} actually shows a smaller confidence region as shown in~\eq{ds bound}. $\|W_{s}^{1/2}X_{s}V_{s}^{-1}X_{s}^{\trans}W_{s}^{1/2}\|_2$ in the right side of~\eq{ds bound} can be calculated exactly in stage $s+1$. Therefore we set $r_{s-1}=\lambda^{1/2}S+\sqrt{(s-1)\|W_{s-1}^{1/2}X_{s-1}V_{s-1}^{-1}X_{s-1}^{\trans}W_{s-1}^{1/2}\|_2}$ in our implementation.
\end{document}